\title{Causal Kripke Models\footnote{This project has received funding from the European Union’s Horizon 2020 research and innovation programme under the Marie Skłodowska-Curie grant agreement No 101007627.}}
\author{Yiwen Ding$^2$ \footnote{Yiwen Ding, Ruoding Wang and Xiaolong Wang are supported by the China Scholarship Council}
\and
 Krishna Manoorkar$^2$ \thanks{Krishna Manoorkar is supported by the NWO grant KIVI.2019.001 awarded to Alessandra Palmigiano.}
\and
 Apostolos Tzimoulis$^2$
\and 
 Ruoding Wang$^{2,3\,\small{\dagger}}$
\and
 Xiaolong Wang$^{1,2\,\small{\dagger}}$
 \institute{
Shandong University$^1$ $\quad$  Vrije  Universiteit, Amsterdam$^2$ $\quad$  Xiamen University$^3$}
}
\newtheorem{theorem}{Theorem}[section]
\newtheorem{definition}[theorem]{Definition}
\newtheorem{example}[theorem]{Example}
\newcommand{\marginnote}[1]{\marginpar{\raggedright\tiny{#1}}}
\begin{document}
\maketitle

\begin{abstract}
This work  extends Halpern and Pearl's causal models for actual causality to a possible world semantics environment. Using this framework we introduce a logic of actual causality with modal operators, which allows for reasoning about causality in scenarios involving multiple  possibilities, temporality,  knowledge and uncertainty. 
We illustrate this with a number of examples, and conclude by discussing some future directions for research. 
\end{abstract}

\section{Introduction}\label{sec:introduction}


Causality is crucial in human reasoning and knowledge. Defining and formalizing causality has been a significant area of research in philosophy and formal methods \cite{falcon2006aristotle,hume2007enquiry,lewis1974causation, dubois2020glance}. In recent years, with the rise of machine learning and AI, there has been growing interest in formalizing causal reasoning. One of the key areas of AI research is designing algorithms capable of comprehending causal information and performing causal reasoning \cite{bergstein2020ai,pearl2018book,scholkopf2022causality}. Causal reasoning can be instrumental in formally modeling notions such as responsibility, blame, harm, and explanation, which are important aspects in designing ethical and responsible AI systems \cite{beckers2022causal}.

In this article we focus on the kind of causality known as "actual causality" (a.k.a. token causality) \cite{dawid2007fundamentals,hitchcock2013cause,hitchcock2001tale,woodward2005making}. Actual causality refers to the causality of a specific event which has actually happened (e.g.~"John died because Alice shot him") rather than general causes (e.g.~"smoking causes cancer"). Several formal approaches have been used for modelling actual causality \cite{lewis1974causation,lewis2004causation,glymour2007actual,hall2007structural}. One of the most prominent formalizations of actual causation was developed by Halpern and Pearl \cite{pearl2009causality,halpern2005causes,halpern2005causesII}. This model describes dependencies between {\em endogenous variables} and {\em exogenous variables} using {\em structural equations}. Based on causal models Halpern and Pearl have given three different definitions of actual causality known as {\em original, updated and modified} definitions \cite{halpern2005causes,halpern2005causesII,halpern2015modification} of actual causality using counterfactual reasoning. The formal language developed to describe actual causality in this model is used to define several notions like  normality,  blame, accountability and responsibility. This model has been used in several applications in law \cite{moore2009causation},   database  theory \cite{meliou2010causality},   model checking \cite{chockler2008causes,datta2015program,beer2012explaining}, and AI \cite{ibrahim2020actual,dubois2020glance,beckers2022causal}.

Notions like  knowledge, temporality, possibility, normality (or typicality) and uncertainty play important role in causal reasoning and related applications. In the past, attempts have been made to incorporate some of these notions into the causal models of Halpern and Pearl. In~\cite{beer2012explaining},  Beer et.al.~define causality in linear temporal logic to explain counterexamples. This line of research has been carried forward in model checking and program verification~\cite{baier2021verification,10.1007/978-3-642-35873-9}.
The Halpern and Pearl formalism has also been extended to define causality in frameworks such as transition systems and Hennessy-Milner logic~\cite{caltais2016composing,caltais2020causal,baier2021verification}. In \cite{barberoobserving},  Barbero et.al.~define  causality with epistemic operators. However, to the best of our knowledge a general Kripke model for actual causality based on Halpern and Pearl framework has not been studied yet.

In this work, we develop the notion of causal Kripke models and introduce a modal language for causal reasoning with uncertainty, temporality, possibility, and epistemic knowledge. We show that our model can formalize notions like sufficient causality, blame, responsibility, normality, and explanations. Our framework provides a more natural definition of sufficient causality \cite[Section 2.6]{halpern2016actual}  by considering nearby contexts, which Halpern's causal model does not support
(c.f. \ref{ssec:sufficint causality}). The developed causal Kripke models offer a straightforward way to describe nearby contexts and define sufficient causality as intended by Halpern. In order to stay as close as possible to Halpern's original framework, where formally only atomic events can be causes, we utilize a hybrid language not only contains modalities but also names for the possible worlds.

The structure of the paper is as follows. In Section \ref{sec:preliminaries}, we provide preliminaries on causal models and logic of  causality. In Section \ref{sec: Motivation}, we give several examples to motivate the development of causal Kripke semantics. In Section \ref{sec: Possible}, we define causal Kripke models, and develop a modal logic of actual causality to reason about them. We generalize the Halpern-Pearl definitions of actual causality to this framework and provide a sound and complete axiomatization of the modal logic of actual causality. In Section \ref{sec:Examples}, we model the examples discussed in Section \ref{sec: Motivation} using our framework and also show how this model can be used to provide an intuitive definition of sufficient causality. Finally, in Section \ref{sec:conclusion} we conclude and provide some directions for future research.

\section{Preliminaries} \label{sec:preliminaries}
\subsection{Causal models} \label{ssec:causal models}
In this section we briefly recall key concepts and ideas of the  standard logic of causal reasoning as presented in \cite{halpern2016actual}.  A causal model describes the world in terms of variables which take values over certain sets. The variables and their ranges are given by a {\em signature} $\mathcal{S}=(\mathcal{U}, \mathcal{V}, \mathcal{R})$ where $\mathcal{U}$ is a finite set of {\em exogenous} variables (i.e., variables whose value is independent of other variables in the model), $\mathcal{V}$, which is disjoint with $\mathcal{U}$, is a finite set of {\em endogenous} variables (i.e., variables whose value is determined by  other variables in the model), and $\mathcal{R}(X)$ for any $X \in \mathcal{U} \cup \mathcal{V}$, is the (finite)  range of $X$.  These variables may have dependencies between them described by {\em structural equations} defined as follows.
\begin{definition}
  A causal model is a pair $(\mathcal{S}, \mathcal{F})$, where $\mathcal{S}=(\mathcal{U}, \mathcal{V}, \mathcal{R}) $ is the model's signature and $ \mathcal{F} = (f_{V_i} \mid V_i \in \mathcal{V})$ assigns to each endogenous  variable $V_i$ a map such that
  \begin{center}
     $f_{V_i}: \mathcal{R} (\mathcal{U}\cup  \mathcal{V} -\{V_i\}) \to \mathcal{R}(V_i). $ 
  \end{center}
\end{definition}
\begin{definition}
For any variables $V \in \mathcal{V}$ and $X \in \mathcal{U} \cup \mathcal{V}$, we say "$X$ is a direct cause, or a parent, of $V$" if there exist $x, x' \in \mathcal{R}(X)$ and $\overline{z} \in \mathcal{R}( \mathcal{U} \cup \mathcal{V} -\{X,V\}) $ such that 
$f_V(\overline{z},x) \neq f_V(\overline{z},x')$. A causal model is said to be \textbf{recursive} if it contains no cyclic dependencies. 
\end{definition}
\begin{definition}
 For a causal model $M= (\mathcal{S}, \mathcal{F})$, a \textbf{ context} $\overline{t} $  assigns every variable $U \in \mathcal{U}$ a value in $\mathcal{R}(U)$.  A \textbf{ causal setting} is a pair $(M, \overline{t})$, where $M$ is a causal model and $\overline{t} $ is a context for it. 
\end{definition}
In recursive models, as there are no cyclic dependencies the values of all endogenous variables are determined by the context. Throughout this paper we only consider recursive causal models.

\begin{definition}
    Let $M = (\mathcal{S}, \mathcal{F}) $ be some causal model and $\mathcal{Y} \subseteq \mathcal{V}$ be a set of endogenous variables. 
    Let $\overline{Y}$ be the  injective listing of the variables of $\mathcal{Y}$. Let $\overline{Y} = \overline{y}$ be an assignment such that $ y_i \in \mathcal{R}(Y_i) $ for every $Y_i \in \mathcal{Y}$. The causal model obtained from \textbf{ intervention} setting values of variables of $\overline{Y}$ to $\overline{y}$ is given by $M_{\overline{Y} \leftarrow \overline{y}} = (\mathcal{S}, \mathcal{F}_{\overline{Y} \leftarrow \overline{y}}) $ , where $\mathcal{F}_{\overline{Y} \leftarrow \overline{y}}$ is obtained by replacing for every variable $Y_i \in \mathcal{Y} $, the structural equation $f_{Y_i}$ with $Y_i=y_i$.
\end{definition}
Here, we consider the exogenous variables as given. Thus, we do not allow interventions on them. 
\subsection{Basic language for describing causality}\label{ssec:basic language}
The basic language, $\mathrm{L}_C$, for describing causality is an extension of  propositional logic where {\em primitive events}  are of the form $X=x$,  where $X \in \mathcal{V}$ is an endogenous variable and $x \in \mathcal{R}(X)$. Given the signature $\mathcal{S}=(\mathcal{U}, \mathcal{V}, \mathcal{R})$, the formulas $\phi\in\mathrm{L}_C$ are defined by the following recursion:
\begin{center}
    $\alpha ::= X=x \mid \neg \alpha \mid \alpha \wedge \alpha  \quad \text{where}, \, X \in \mathcal{V}, \,  x \in \mathcal{R}(X)$
$
\phi::=  X=x  \mid \neg \phi \mid \phi \wedge \phi \mid [\overline{Y}\leftarrow\overline{y}]\alpha \quad \text{where}, \, \overline{Y}\leftarrow\overline{y} \, \text{is an intervention}$
\end{center}

For any causal setting $(M,\overline{t})$ and formula $\phi \in \mathrm{L}_C$, the satisfaction relation $(M,\overline{t}) \Vdash \phi$ is defined as follows. For any formula $X=x$, $(M,\overline{t}) \Vdash X=x$ if the value of endogenous variable $X$ is set to $x$ in context $\overline{t}$. Satisfaction for the Boolean connectives is defined in a standard manner. Satisfaction of intervention formulas is defined as follows: for any  event $\alpha$,  $(M,\overline{t}) \Vdash [\overline{Y}\leftarrow\overline{y}]\alpha $ iff  $(M_{\overline{Y}\leftarrow\overline{y}},\overline{t}) \Vdash \alpha$. This language is used by Halpern and Pearl to provide three different definitions of  causality  referred as {\em original} , {\em updated} and {\em modified}  definitions of causality \cite[Section 2.2]{halpern2016actual} (for details see Appendix \ref{def: causality original}).

\section{Motivation for possible world semantics of causal models}\label{sec: Motivation}
The basic language for causal reasoning described above uses propositional logic as the language of events and for reasoning with causal formulas. However, we are interested in describing causal reasoning in scenarios that involve notions like possibility, knowledge or belief, temporality, uncertainty and accessibility. Here we provide several such examples. 

\begin{example}[Umbrella]\label{ex: umbrella}
Alice is going on a trip to London. She thinks that it may rain when she is there. Thus, she decides to take her umbrella with her for the trip. In this example, the possibility of raining in London in the future seems to be the cause for Alice taking her umbrella with her. 
\end{example}

\begin{example}[Chess]
In a chess game, if knight and the king are only pieces that can move but every king move leads to king getting in check, then the player is forced to move the knight. Suppose that the king can not move to a certain square because it is covered by a bishop. Then it seems reasonable that the fact that bishop covers this square to  be a cause for player being forced to  move the knight. This example shows that reasoning with causality naturally involves considering possibilities.  
\end{example}


\begin{example}[Police]
Suppose John is a criminal who is currently absconding. Inspectors Alice and Bob are trying to catch John. John is currently in Amsterdam. He has a train ticket to Brussels. Thus, his (only)  options are to stay in Amsterdam or to take the train to Brussels. Bob decides to go to Brussels to catch John in case he takes the train. John learns this information and decides to stay in Amsterdam where  Alice catches him. In this case, John's belief of Bob's presence in Brussels leads to him staying in Amsterdam. It seems reasonable that this should be part of the cause of John getting caught, even though he was caught by  Alice in Amsterdam. This shows that the knowledge of the agents is crucially involved in causal reasoning.
\end{example}


\begin{example}[Robot]
Consider a scenario in which a robot is being commanded by a scientific team. Upon receiving  command c, the robot completes task t or malfunctions. In this case the possibility of causing  malfunction may become the cause of not sending command c. i.e.~, causal reasoning involves scenarios in which the dependencies between different events may be "indeterministic" or "underdetermined". Halpern considers such scenarios in \cite[Section 2.5]{halpern2016actual},  using the notion of probabilities over causal models. However, in certain cases qualitative reasoning in terms of possibilities may be more appropriate.
\end{example}

\begin{example}[Navigation]
Suppose Alice is trying to reach village A. She reaches a marker which indicates that she is at location B, C or D. She does not know in which of these locations she is at. However, she knows that A is to the east of all of these locations. Thus, she decides to go east. Suppose Alice was actually at point B. The fact that A is to the east of locations C and D is still part of Alice's considerations and  seems to be part  of the cause for her going  east. 
\end{example}

These examples highlight that notions such as possibility, knowledge and uncertainty play an important role in causal reasoning. 
Possible world semantics, formally described by Kripke models, are the natural logical framework for modeling such  notions. 
In the next section we develop a framework for causal reasoning, based on Kripke frames, which allows for modeling such scenarios in a clear, intuitive and efficient way.

\section{Possible world semantics for causal reasoning}\label{sec: Possible}

In this section, we define the causal Kripke model, introduce the modal language for causality and give the corresponding three HP definitions of causality in causal Kripke models. In our framework, we  allow the same variable to possibly take different values in different worlds. Moreover, the structural equations treat the same endogenous variable separately for each different possible world. 
\begin{definition}
  A \textbf{causal Kripke model} is a tuple $\mathcal{K} = (\mathcal{S},W, R, \mathcal{F}$), where $W$ is a finite set of possible worlds, $R \subseteq W \times W$ is an accessibility relation, and 
 $\mathcal{S}=(\mathcal{U}, \mathcal{V}, \mathcal{R})$ is the signature such that   $\mathcal{U}$ and  $\mathcal{V}$ are the disjoint sets of exogenous and endogenous variables, and $\mathcal{R}$ is a function  assigning each   $\Gamma \in \mathcal{U} \cup  \mathcal{V}$ and a world $w\in W$ a set of possible values that $\Gamma$ can take at $w$, and  $ \mathcal{F} = (f_{(X_i,w_j)} \mid X_i \in \mathcal{V}, w_j \in W)$ assigns to each endogenous variable $X_i$ and each world $w_j$ a map such that
  \begin{center}
     $f_{(X_i,w_j)}:\ \mathcal{R}((\mathcal{U} \cup \mathcal{V})\times W)-\{(X_i,w_j)\}) \to \mathcal{R}(X_i,w_j).$
  \end{center}
\end{definition}

\noindent For any causal Kripke model  $\mathcal{K} = (\mathcal{S},W, R, \mathcal{F})$ we refer to $\mathcal{S}$ as its {\em signature}. For any variable $\Gamma$ and world $w$ we use $(\Gamma, w)$ to denote the restriction of variable $\Gamma$ to the world $w$. That is, $(\Gamma, w)$ is a variable which takes a value $c$ iff the propositional variable $\Gamma$ takes the value $c$ at the world $w$.
For any $\Gamma \in  \mathcal{U}$ (resp.~$\Gamma \in  \mathcal{V}$) and any world $w\in W$, we say $(\Gamma,w)$ is an exogenous (resp.~endogenous) variable. Note that we allow the same endogenous variable to have different structural equations associated with it in different worlds. 

\begin{definition}
A  \textbf{ context}  over a causal Kripke model $\mathcal{K} = (\mathcal{S},W,R, \mathcal{F})$ is a function  $\overline{t}$ such that for any $w \in W$, and $U \in \mathcal{U}$, assigns a value in $\mathcal{R}(U,w)$. A \textbf{causal Kripke setting} is a pair $(\mathcal{K},\overline{t})$, where $\mathcal{K}$ is a causal Kripke model and $\overline{t}$ is a context for it.
\end{definition}

\begin{definition}
For any variables $X \in \mathcal{V}$, and $\Gamma \in \mathcal{U} \cup \mathcal{V}$, and any $w,w' \in W$,  we say "$(X,w)$ is a direct cause, or a parent, of $(\Gamma,w')$" if there exist $\gamma,\gamma'  \in \mathcal{R}(\Gamma,w'),$ and $\overline{z} \in \mathcal{R}( (\mathcal{U} \cup \mathcal{V})\times W -\{(\Gamma,w')\}) $ such that 
$f_{(X,w)}(\overline{z},\gamma) \neq f_{(X,w)}(\overline{z},\gamma')$. A causal Kripke model is said to be \textbf{recursive} if it contains no cyclic dependencies. 
\end{definition}

\noindent In recursive models, as there are no cyclic dependencies the values of all endogenous variables at all the worlds are completely determined by the context. If $\mathcal{V}$ only contains binary variables (i.e.~the variable which take values either $0$ or $1$), then for any context $\overline{t}$, and any world $w$,  we use $\overline{t}(w)$ to denote the set of endogenous variables set to value $1$ at $w$ by $\overline{t}$. In this paper, we only consider recursive causal Kripke models.

\begin{definition}
Given a causal Kripke model $\mathcal{K} = (\mathcal{S},W,R, \mathcal{F})$, as {\em assignment} over $\mathcal{K}$  is a function on some subset $ \mathcal{Y} \subseteq \mathcal{V}\times W$ such that, for every $Y =(X,w) \in \mathcal{Y}$, it assigns some value in $\mathcal{R}(X,w)$. 
\end{definition}
\begin{definition}
Let $  \mathcal{K} = (\mathcal{S},W,R, \mathcal{F}) $ be some causal Kripke model  and  $\mathcal{Y}$ be a finite subset of $\mathcal{V} \times W$. Let $\overline{Y} $ be an injective (possibly empty) listing of all the variables in $\mathcal{Y} $. Let $\overline{Y} = \overline{y}$ be an assignment such that for any $Y_i \in \mathcal{Y}$, $ y_i \in \mathcal{R}(Y_i) $. The causal Kripke model obtained from \textbf{intervention} setting values of variables of $\overline{Y}$ to $\overline{y}$ is given by $\mathcal{K}_{\overline{Y} \leftarrow \overline{y}} = (\mathcal{S},W,R, \mathcal{F}_{\overline{Y} \leftarrow \overline{y}}) $ , where $\mathcal{F}_{\overline{Y} \leftarrow \overline{y}}$ is obtained by replacing for every variable $Y_i \in \mathcal{Y} $, the structural equation $f_{Y_i}$ with $Y_i=y_i$.
\end{definition}

\subsection{Modal logic language for describing causality}\label{subse: language}
In this section we define the formal logical framework we introduce for describing causality. Since we want to talk about variables whose values depend on the possible world of a Kripke model, our language will be hybrid in character, augmenting the standard language (as presented e.g.\ in Section \ref{ssec:basic language}) not only with modal operators but also with a countable set of names, denoted with $W$. In principle each model $M$ comes with an assignment from $W$ to points of $M$, however in practice we will often conflate names with elements of Kripke models. The reason we require a countable number of names, even though the models are always finite, is because there is no bound on the size of the models.  We will denote the language with $\mathrm{L}_M(W)$. We often omit $W$ and write $\mathrm{L}_M$ when $W$ is clear from the context. $\mathcal{S}$ is a given signature, and all $X, Y, x, y$ come from $\mathcal{S}$. In what follows we will consistently use $Y$ to denote a variable parametrized with a name for a world (i.e.\ $Y=(X,w)$). It is important to notice that in our language interventions involve only such variables. Any  event $\alpha$ and formula $\phi$ of the language  $\mathrm{L}_M$ is defined by the following recursion. 
\begin{center}
    $\alpha ::= X=x\mid (X,w)=x \mid \neg \alpha \mid \alpha \wedge \alpha \mid \Box \alpha  \quad \text{where}, \, X \in \mathcal{V}, w \in W$

$\phi::=  X=x  \mid (X,w)=x \mid\neg \phi \mid \phi \wedge \phi \mid \Box\phi\mid [\overline{Y}\leftarrow\overline{y}]\alpha \quad \text{where}, \, \overline{Y}\leftarrow\overline{y} \, \text{is an intervention}$

\end{center}

In particular, the language $\mathrm{L}_M$ has two types of atomic propositions, using variables of the form $X$ and of the form $(X,w)$. The second, the hybrid aspect of our language, provides global information regarding the Kripke model. For any causal Kripke setting $(\mathcal{K},\overline{t})$ with 
$\mathcal{K} = (\mathcal{S},W,R, \mathcal{F})$,  any causal formula $\phi$, and any world $w \in W$, we define satisfaction relation $\Vdash$ in the following way. For any primitive event $X=x$ (resp.~$(X,w')=x$),  $(\mathcal{K},\overline{t},w) \Vdash X=x$ 
(resp.~$(\mathcal{K},\overline{t},w) \Vdash (X,w')=x$) iff the value of $X$ is set to be $x$ at $w$ (resp.~at $w'$) by the context $\overline{t}$. Note that the satisfaction of $ (X,w')=x$ is independent of the world it is evaluated at. The satisfaction relation for Boolean connectives is defined by standard recursion. For the $\Box$ operator, 
\begin{center}
 $(\mathcal{K},\overline{t}, w) \Vdash \Box\alpha \quad \text{iff} \quad \text{for all}\, w', \, w R w' \, \text{implies}\, (\mathcal{K},\overline{t}, w') \Vdash \alpha.$   
\end{center}
Let $\mathcal{Y} \subseteq \mathcal{V} \times W$ be a set of endogenous variables. Satisfaction of intervention formulas is defined as for any  event $\alpha$,  $(\mathcal{K}  ,\overline{t}, w) \Vdash [\overline{Y}\leftarrow\overline{y}]\alpha $ iff  $(\mathcal{K} _{[\overline{Y}\leftarrow\overline{y}]},\overline{t}, w) \Vdash \alpha $.  Satisfaction for Boolean combinations of causal formulas is defined in a standard manner. For the $\Box$ operator, 
\begin{center}
    $(\mathcal{K},\overline{t}, w) \Vdash \Box\phi \quad \text{iff} \quad \text{for all}\, w', \, w R w' \, \text{implies}\, (\mathcal{K},\overline{t}, w') \Vdash \phi.$
\end{center}
We now extend the HP definition(s) of causality to the setting of causal Kripke models.
\begin{definition}\label{def: causality original}
Let $\alpha$ be any event.   For $\overline{Y}\subseteq \mathcal{V} \times W, \overline{Y} = \overline{y} $ is an {\em actual cause}  of $\alpha$ in a  causal Kripke setting $(\mathcal{K},\overline{t})$ at a world $w$ if the following conditions hold.
\begin{itemize}
    \item[AC1.] 
$(\mathcal{K},\overline{t},w) \Vdash \alpha$\ 
 and for every $w_j \in W$, $(\mathcal{K},\overline{t},w_j)\Vdash (X_i,w_j)=y_{ij}$, for every $(X_i, w_j)=y_{ij} \in \overline{Y}=\overline{y}  $. 
    \item[AC2a.] There exists a partition of $\mathcal{V}\times W$ into two disjoint subsets  $\overline{Z}$ and  $\overline{N}$ with $ \overline{Y} \subseteq  \overline{Z}$ and  settings $\overline{y'}$ and $\overline{n}$ of variables in $\overline{Y}$ and $\overline{N}$, such that 
    \begin{center}
      $(\mathcal{K},\overline{t},w) \Vdash [\overline{Y} \leftarrow \overline{y'}, \overline{N} \leftarrow \overline{n} ] \neg \alpha.$  
    \end{center}

     \item[AC2$b^o$.] Let $\overline{z}^\ast$ be the unique setting of the variables in $\overline{Z}$ such that $(\mathcal{K},\overline{t},w)\Vdash\overline{Z} = \overline{z}^\ast$. If $(\mathcal{K},\overline{t},w')\Vdash X=z^\ast$, for every $(X,w') =z^\ast \in \overline{Z}=\overline{z}^\ast$, then for all subsets $\overline{Z}'$ of $\overline{Z} \setminus \overline{Y}$ we have 
     \begin{center}
          $(\mathcal{K},\overline{t},w) \Vdash [\overline{Y} \leftarrow \overline{y}, \overline{N} \leftarrow \overline{n}, \overline{Z}' \leftarrow \overline{z}'^\ast ]  \alpha.$      \footnote{Here we use the abuse of notation that if $\overline{Z}'\subseteq\overline{Z}$ and $\overline{Z}=\overline{z}^\ast$, then $\overline{z}'^\ast$ in $\overline{Z}'\leftarrow\overline{z}'^\ast$ refers to the restriction of $\overline{z}^\ast$ to $\overline{Z}'$.}
     \end{center} 
     
     \item[AC3.] $\overline{Y}$ is a minimal set of variables that satisfy AC1 and AC2.
\end{itemize}
We say that $\overline{Y} = \overline{y} $ is an \textbf{actual cause}  of $\alpha$ in  a causal Kripke  setting $(\mathcal{K},\overline{t})$ at  a world $w$ by \textbf{updated definition} iff AC1, AC2a, AC3 hold and AC2$b^o$ is replaced by the following condition. 
\item[AC2$b^u$.] Let $\overline{z}^\ast$ be the unique setting of the variables in $\overline{Z}$ such that $(\mathcal{K},\overline{t},w)\Vdash\overline{Z} = \overline{z}^\ast$. If 
     $(\mathcal{K},\overline{t},w')\Vdash X=z^\ast$, for every $(X,w')=z^\ast \in \overline{Z} = \overline{z}^\ast$, then for all subsets $\overline{Z}'$ of $\overline{Z} \setminus \overline{Y}$ and $\overline{N}'$ of $\overline{N}$ we have  
\begin{center}
         $(\mathcal{K},\overline{t},w) \Vdash [\overline{Y} \leftarrow \overline{y}, \overline{N}' \leftarrow \overline{n}, \overline{Z}' \leftarrow \overline{z}'^\ast ]  \alpha.$
\end{center}
We say that $\overline{Y} = \overline{y} $ is an \textbf{actual cause}  of $\alpha$ in  a causal Kripke setting $(\mathcal{K},\overline{t})$ at a world $w$ by \textbf{modified definition} iff AC1, AC3 hold and AC2 is replaced by the following condition.
\item[$AC2a^m$.] If there exists a set of variables $\overline{N} \subseteq \mathcal{V} \times W$, and  a setting  $\overline{y}'$ of the variables in $\overline{Y}$ such that if $\overline{n}^\ast $ is such that 
     $(\mathcal{K},\overline{t},w')\Vdash X=n^\ast$, for every $(X,w')=n^\ast \in \overline{N}=\overline{n}^\ast  $, then 
     \begin{center}
            $(\mathcal{K},\overline{t},w) \Vdash [\overline{Y} \leftarrow \overline{y}', \overline{N} \leftarrow \overline{n}^\ast ]  \neg\alpha.$
     \end{center}  
\end{definition}

We will refer to these definitions as original, updated, and modified  definitions  henceforth. Example \ref{ex:stalemate revisited} shows that these definitions do not in general coincide.  Theorem  \ref{thm:reln between three defn} which relates these definitions in causal models  can be generalized to causal Kripke models in a straightforward manner (see Theorem, \ref{thm:reln between three defn modal}).

For any set of variables $\mathcal{Y} \subseteq \mathcal{V} \times W$,  we use $cause^o (\overline{Y}=\overline{y}, \alpha) $ (resp.~$cause^u (\overline{Y}=\overline{y},\alpha)$, $cause^m (\overline{Y}=\overline{y},\alpha)$) as an  abbreviation for stating $\overline{Y}=\overline{y}$ is a cause of $\alpha$ by the original (resp.~updated, modified) definition.
We write $ (\mathcal{K},\overline{t},w) \Vdash cause^o(\overline{Y}=\overline{y}, \alpha)$ (resp.~$ (\mathcal{K},\overline{t},w) \Vdash cause^u(\overline{Y}=\overline{y}, \alpha)$, $ (\mathcal{K},\overline{t},w) \Vdash cause^m (\overline{Y}=\overline{y},\alpha)$ ) as an abbreviation for stating  $\overline{Y}=\overline{y}$ is a cause of $\alpha$ in causal Kripke setting $ (\mathcal{K},\overline{t}) $ at a world $w$ by the original (resp.~updated, modified) definition. Moreover, For $x=o,u,m$, we write   $ (\mathcal{K},\overline{t},w) \Vdash \Box cause^{x} (\overline{Y}=\overline{y}, \alpha)$  if for all $w'$ such that $w R w'$, $ (\mathcal{K},\overline{t},w') \Vdash cause^x(\overline{Y}=\overline{y}, \alpha)$ and $ (\mathcal{K},\overline{t},w) \Vdash \Diamond cause^x(\overline{Y}=\overline{y}, \alpha)$  if there exists  $w'$ such that $wRw'$ and  $ (\mathcal{K},\overline{t},w') \Vdash cause^x(\overline{Y}=\overline{y}, \alpha)$.

\subsection{Axiomatization}\label{ssec:axiomatization}
In \cite{halpern2016actual}, Halpern provides a sound and complete axiomatization for the logic of causality. This axiomatization can be extended to the modal  logic of causality by adding the following axioms to the axiomatization in \cite[Section 5.4]{halpern2016actual}:
\begin{itemize}
    \item All substitution instances of axioms of basic modal logic $\mathrm{K}$. 
    \item Necessitation rule:  from $\phi$ infer $\Box \phi$
    \item $\Diamond$-axiom : $[\overline{Y} \leftarrow \overline{y}] \Diamond \phi \Leftrightarrow  \Diamond 
 [\overline{Y} \leftarrow \overline{y}]\phi$ \quad and \quad $\Box$-axiom : $[\overline{Y} \leftarrow \overline{y}] \Box \phi \Leftrightarrow  \Box 
 [\overline{Y} \leftarrow \overline{y}]\phi$
 \item G-axiom: $([\overline{Y} \leftarrow \overline{y}](X,w)=x) \Rightarrow \Box([\overline{Y} \leftarrow \overline{y}](X,w)=x )$
\end{itemize}

Notice that, similar to the axiomatization in  \cite[Section 5.4]{halpern2016actual}, the schemes $\Diamond$-axiom, $\Box$-axiom, and G-axiom  include empty interventions. When importing the axioms from \cite[Section 5.4]{halpern2016actual} axiom scheme C5 involves only variables of the form $(X,w)$. For the axiom schemes C1-4 and C6, the axioms involve atoms both of the form $X=x$ and of the form $(X,w)=x$. Notice also that G-axiom is similar to axioms in Hybrid modal logic. 

Since the language in \cite{halpern2016actual} is finite (modulo classical tautologies), weak and strong completeness coincide. However our language is countable (given that $W$ is countable). Since there is no upper bound on the size of the models, we cannot hope to have strong completeness w.r.t.\ finite models. However the axioms presented in this section are sound and weakly complete. In Appendix  \ref{sec: compl}  we provide the proofs of \textbf{soundness} and weak \textbf{completeness} w.r.t.~the modal logic of causality.


\section{Examples and applications}\label{sec:Examples}
In this section, we analyze the examples discussed in Section \ref{sec: Motivation} using  causal Kripke models. For any endogenous variable $X$, and any world $w$, we use $Eq(X,w)$ to denote the structural equation for $X$ at $w$. Throughout this section, we use $U$ to denote exogenous variables only. 
\begin{example}[Umbrella]  Let $\mathcal{S}$ be the signature with  endogenous variables $p$, $q$, and $r$ standing for `it rains in London' and `Alice adds  umbrella to the luggage' and  `Alice is in London'.  Let $w_0$ be the current world and $w_1, w_2, w_3$ be the future possible worlds considered by Alice. Let $W =\{w_0,w_1,w_2, w_3\}$ and  $R= \{(w_0,w_1), (w_0,w_2),(w_0,w_3)\}$. Let $U =(U_1,U_2)\in \{0,1\}^2$ be  such that $(p,w)=(U_1,w)$ and  $(r,w)=(U_2,w)$ for  any $w \in W$.  Let   $Eq(q,w) = \Diamond (p \wedge r)$ for all $w$, i.e., Alice puts her umbrella in her luggage  if she thinks it is possible that in the future she will be in London and it rains there.

Let $\overline{t}$ be a context such that $U$ is set to be $(0,0)$,$(0,1)$, $(1,0)$ and $(1,1)$ at the worlds  $w_0, w_1, w_2$ and $w_3$ respectively. We have $\overline{t} (w_0)= \{q\}$, $\overline{t} (w_1)= \{r\}$, $\overline{t} (w_2)= \{ p\}$, $\overline{t} (w_3)= \{ p,r\}$. Here,  $(p,w_3)=1$ and $(r,w_3)=1$ are both causes of $q=1$ at $w_0$ by all three  definitions. 

We show  that $(p,w_3)=1$ is a cause by the original and updated definitions. The proof for  $(r,w_3)=1$ is analogous.
Indeed, as $(\mathcal{K},\overline{t},w_3)\Vdash (p,w_3)=1$, $(\mathcal{K},\overline{t},w_3)\Vdash (r,w_3)=1$ 
and $(\mathcal{K},\overline{t},w_0) \Vdash q$, AC1 is satisfied. Let $\overline{Z} = \{(r,w_3),(p,w_3)\}$ and $\overline{N} = \varnothing$, $\overline{Y} =\{(p,w_3)\} $, and $\overline{y'} =0$.  Then from the structural equation as no  world related to $w_0$ satisfies $p \wedge r$ under this intervention, we have 
\begin{center}
    $(\mathcal{K},\overline{t},w_0) \Vdash [\overline{Y} \leftarrow 0] \neg (q=1) \quad \text{and} \quad  (\mathcal{K},\overline{t},w_0) \Vdash [\overline{Y} \leftarrow 1, \overline{Z'} \leftarrow \overline{z}^\ast ]  q=1.$
\end{center}

\noindent where $\overline{Z'}=(r, w_3)$, $\overline{z}^\ast=(1,1)$ as described by the context. 
Thus, AC2 is satisfied and AC3 is trivial as we are considering a single variable. The updated definition in this case is equivalent to the original definition as $\overline{N} =\emptyset$. The modified definition is satisfied for the same setting $\overline{y'} =0 $ and $\overline{N} = \{(r,w_3)\}$. 
Thus, the fact that it rains in the world $w_3$ is a cause of Alice carrying her umbrella by all three definitions. 
\end{example}

Now consider a slight variation of the above example in which we are sure Alice will be in the London and do not include $r$ as a variable in our analysis. In this case, the structural equation for $q$ is given by $q=\Diamond p$. Note that, in  this new model, we can argue in the way similar to the above example that any world $w'$ accessible from $w_{0}$,  $(p,w')=1$ would be a part of the cause of Alice carrying her umbrella at $w_0$ by all three definitions. This can be interpreted as the fact that 'Alice considers the possibility of a future world in which it rains in London and she will be in London' is a part of cause of  her adding umbrella to luggage. 

In general, for any event $\alpha$ and endogenous variable $X$ we say that "the possibility of ${X}={x}$" is a  cause of $\alpha$ at $w$  iff   $\bigwedge \{(X,w')=x \mid w R w' \& (\mathcal{K},\overline{t}, w') \Vdash {X}={x}\}$ is a cause of  $\alpha$ at $w$ by the modified definition. Here, we use the modified definition because as mentioned by Halpern~\cite[Example 2.3.1]{halpern2016actual}, the conjunction being a cause by modified definition can in fact be interpreted as a cause being disjunctive, i.e.~, the disjunction of the conjuncts can be interpreted as the cause of the event. Hence under this interpretation the existence of some $w'$ which is accessible from $w_{0}$ and where ${X}={x}$ is a cause of $\alpha$ here.  This can be interpreted as "the possibility of ${X}={x}$" being a cause of $\alpha$. Thus, in the variation of the example discussed in the above paragraph,  we can say that the possibility of the world where it rains in London is a cause of Alice adding umbrella to her luggage. 

\begin{example}[stalemate] \label{ex:stalemate}
 Let $\mathcal{S}$ be the signature with  endogenous variables $p$,  $q$ and $r$ standing for `The king is in check', `The king and the knight are the only  pieces that can move in the current position' and   `The player is forced to move the knight'. Let $w_0$ be the current  position. Let $w_1$ and $w_2$ be the positions obtained from  the (only) available moves by the king. Let $W=\{w_0, w_1, w_2\}$ and $R= \{(w_0,w_1), (w_0,w_2)\} $.
Let $U  =(U_1,U_2)\in \{0,1\}^2$ be such that  $(p,w)=(U_1,w)$ and  $(q,w)=(U_2,w)$ for  any $w \in W$. Let   $Eq(r,w)= \neg p \wedge q \wedge \Box p$ at any $w$, i.e., the player is forced to move the knight if the king and the knight are the only pieces that can move, the king  is not in check,  and the king's every available move leads to the king being in check. 

Let $\overline{t}$ be a context such that $U$ is set to be $(0,1)$,$(1,1)$, and  $(1,0)$ at the worlds  $w_0$, $w_1$, and $w_2$ respectively. We have $\overline{t} (w_0)= \{q,r\}$, $\overline{t} (w_1)= \{p,q\}$, $\overline{t} (w_2)= \{p\}$. In the same way as in the last example,  we can show that $(p,w_0)=0$, $(q,w_0)=1$,   $(p,w_1)=1$  and  $(p,w_2)=1$  are  all the  causes of $r=1$ at $w_0$ by all three  definitions. This can intuitively be seen as the certainty of  the king ending up in a check regardless of the king move (while not currently being in check) is a part of cause of being forced to move the knight. 

{\em In general, for any variable $X$, and  any event $\alpha$ we say that the certainty of ${X}={x}$ is a  cause of $\alpha$ at $w$  
iff  $ ({X},w')={x}$ is a cause of  $\alpha$ at $w$ for all $w R w'$ by  the modified definition.}
\end{example}

\begin{example}[Police]
 Let $\mathcal{S}$ be the signature with  endogenous variables $p$, $q$, $r$, and $s$ standing for `Inspector Bob is in Brussels', `Inspector Alice is in Amsterdam',  `John takes the train', and  `John is caught in Amsterdam'.  Let $w_0$ be the current world and $w_1, w_2$ be the  possible future worlds considered by John.  Let $W =\{w_0, w_1,w_2\}$ and $R= \{(w_0,w_1), (w_0,w_2)\} $.

Let $U \in \{0,1\}^2$ be  such that $(p,w)=(U_1,w)$ and  $(q,w)=(U_2,w)$ for  any $w \in W$.  Let $Eq (r,w_0)= \neg \Box p$ and $Eq (r,w_1)=Eq (r,w_2)= 1$.  i.e.~, John takes the train if there is a  possible future scenario in which inspector Bob is not in Brussels (John considers future scenarios when he takes the train).  Let  $Eq (s,w_0)= q \wedge \neg r$ (John gets caught in Amsterdam if Alice is there and he does not take the train) and $Eq(s,w_1)=Eq (s, w_2)=0$ (John considers future scenarios in which he is not caught).

Let $\overline{t}$ be a context such that $U$ is set to be $(1,1)$ at  all the worlds. We have $\overline{t} (w_0)= \{p,q,s\}$ and $ \overline{t} (w_1)= \overline{t} (w_2) = \{p,q,r\}$. It is easy to check that  $(p,w_1)=1$ and $(p,w_2)=1$ are both causes of $s=1$ at $w_0$ by all three  definitions. Thus, the fact that  Bob is present in Brussels in all possible worlds considered by John is a cause of John getting caught in Amsterdam. If we assume that John knows Bob is in Brussels iff he is actually in Brussels, then we can say that Bob's presence in Brussels is a cause of John getting caught in Amsterdam. 

Here we have assumed that John's knowledge of  Bob's presence in Brussels is the same as  Bob being actually present in Brussels. However, this need not be the case always. 
Now consider a slightly more complicated version of the same story in which we have another endogenous variable $o$ standing for `John lost his ticket'. Suppose John does not take the train if he loses the ticket or he knows inspector Bob is in Brussels. i.e.~,$Eq(r,w_0)= \neg (\Box p \vee o)$. Other structural equations remain the same. Let $\overline{t}'$ be a context that sets variables $p$, $q$ to the same values as $\overline{t}$ at  all the worlds and sets $o$ to be $1$ at $w_0$. In this case $(p,w_1)=1$, $(p,w_2)=1$ and $(o,w_0)=1$ are all the causes of  $s=1$ at $w_0$ by all three  definitions. Now consider slightly different context $\overline{t}''$ such that $p$ is set to be true at worlds $w_0$ and $w_1$, $q$ at worlds $w_0$, $w_1$, and $w_2$ and $o$ at world $w_0$. In this case, we again have $r=0$ and $s=1$ at $w_0$. However, only $(o,w_0)=1$ is a cause for $s=1$ at $w_0$ by all three definitions. Now suppose that  Bob actually did go to Brussels, however John does not \textbf{know} this information  and thinks that there is a possibility that  Bob may not be in Brussels. The only reason he does not take the train is that he lost the ticket. Thus,  Bob being present in Brussels is not a cause of John getting caught in Amsterdam in this case. This  shows that the knowledge John has about the presence of  Bob in Brussels (and not just presence itself) is an important part of causal reasoning. 
\end{example}

\begin{example}[Robot]
 Let $\mathcal{S}$ be the signature with endogenous variables $p$,  $q$ and $r$ standing for `The command $c$ is sent by the scientific team',  `The task $t$ is completed by the robot' and   `The robot malfunctions'.  Let $w_0$ be the current world, the world in which the scientific team is reasoning. Let $w_1$ and $w_2$ be the possible worlds considered by the team.  Let $W= \{w_0,w_1, w_2\}$ and  $R= \{(w_0,w_1), (w_0,w_2)\} $.

Suppose $Eq(q,w_1)= \blacklozenge p$, $Eq(r,w_1)=0$, $Eq(r,w_2)= \blacklozenge p$, $Eq(q,w_2)=0$, and  $Eq (q,w_0)=Eq(r,w_0)=0$, where $\blacklozenge$ is the diamond operator corresponding to the relation $R^{-1}$, i.e., there are two possible scenarios. In one scenario sending command $c$ leads to the completion of task $t$ and no malfunctioning, while in the other it leads to the malfunctioning of the robot and task $t$ is not completed. 

Let $U  \in \{0,1\}$ be  such that $(p,w_i)=(U,w_i)$.   Let $\overline{t}$ be a context such that $U$ is set to be $1$ in all the worlds.  We have $\overline{t} (w_0)= \{p\}$, $\overline{t} (w_1)= \{p,q\}$, $\overline{t} (w_2)= \{p,r\}$.  It is easy to see that $(p,w_0)=1$ is the cause of $r=1$ at $w_2$ but not at $w_1$ (it is not even true at $w_1$) by all three definitions. Suppose the scientific team believes that if  sending command can possibly cause malfunctioning then command shouldn't be sent. Then as $ (\mathcal{K},\overline{t},w_0) \Vdash  \Diamond cause ((p,w_0)=1, r=1)$ ( for all three definitions), the team will decide not to send the command. On the other hand, if the team believes that the command  should be sent if it can possibly cause  the completion of the task,  then it must be sent as  $(\mathcal{K},\overline{t},w_0) \Vdash  \Diamond cause ((p,w_0)=1, q=1)$, i.e.~ sending signal can cause completion of task $t$. 
\end{example}

\begin{example}[Navigation]
 Let $\mathcal{S}$ be the signature with endogenous variables $p_x$,  $q$ and $r$ standing for `The current location of Alice is x' for $x=B,C,D$,  `Point $A$ is to the east of Alice's current location' and   `Alice moves to the east'. Alice  does not know if she is at point $B$, $C$ or $D$. Let $w_1$, $w_2$ and $w_3$ be  possible worlds and  $U \in \{0,1\}^4$ be  such that $(p_B,w)=(U_1,w)$, $(p_C,w)=(U_2,w)$, $(p_D,w)=(U_3,w)$  and  $(q,w)=(U_4,w)$ for  any $w \in W$. 

Let $\overline{t}$ be a context such that $U$ is set to be $(1,0,0,1)$, $(0,1,0,1)$, and  $(0,0,1,1)$  at worlds $w_1$, $w_2$ and $w_3$.  We  have $\overline{t} (w_1)= \{p_B,q,r\}$,  $\overline{t} (w_2)= \{p_C,q,r\}$, and  $\overline{t} (w_3)= \{p_D,q,r\}$.  Worlds $w_1$,  $w_2$ and $w_3$ here represent the possible scenarios considered by Alice. With the current available knowledge these worlds are indistinguishable from each other for Alice. This can be represented by $R= W \times W$. At any world $w$  $Eq(r,w)=\Box q$, i.e.~Alice moves to the East iff she \textbf{knows} point $A$ is to the East of her current location.  Here, in the world $w_1$ in which Alice is at $B$ (the real situation), $(q,w_1)=1$,  $(q,w_2)=1$ and  $(q,w_3)=1$ are all causes of $(r,w_1)=1$ by all three definitions. Thus, the fact that $A$ is to the East of point $C$ or point $D$ is also a cause of Alice moving to East even if she is not present there. 
\end{example}
These examples show that causal Kripke models can be used to model several different scenarios involving causality interacting with notions like possibility, knowledge and uncertainty.

\subsection{Sufficient causality} \label{ssec:sufficint causality}
Halpern discusses the notion of sufficient causality in \cite{halpern2016actual} to model the fact that people's reasoning about causality depends on how sensitive the causality ascription is to changes in various other factors.  ``The key intuition behind the definition of sufficient causality is that not only does $\overline{X}=\overline{x} $
suffice to bring about $\phi$ in the actual context, 
but it also brings it about in other “nearby” contexts. Since the framework does not provide a metric on contexts, there is no obvious way to define nearby context. Thus, in the formal definition below, I start by considering all contexts.''\cite[Section 2.6]{halpern2016actual}   Sufficient causality is thus defined in \cite{halpern2016actual} using Definition \ref{def:sufficient causality Halpern}. 

We can use the framework of causal Kripke models to define sufficient causality  for a causal setting $(M, u)$ in terms of nearby contexts instead of all the contexts (as suggested by Halpern) in the following way. We consider the causal Kripke model $\mathcal{K} = (\mathcal{S},W,R, \mathcal{F}) $, where $\mathcal{S}$ is the signature of  $M$, $W$ is the set of all the possible contexts on $M$, and  $\mathcal{F}$ is the set of structural equations such that for any structural equation for the endogenous variable $X$, $Eq(X,w)$ is the same as the structural equation for $X$ in $M$ and the  relation $R \subseteq W \times W$ is such that  $u R u'$ iff context $u'$ is nearby $u$.  Let $\overline{t}$ be the setting of exogenous variables so that for any possible world the endogenous variables are set by the context identifying that world. 
Let $\overline{X} =\overline{x}$ be as in Definition \ref{def:sufficient causality Halpern} and let $\overline{Y} = \overline{X} \times W$. For any   $Y=(X,u)$, $Y=y$ iff  $X$ is set to be $x$ by the context ${u}$. Let   $\overline{Y} \leftarrow \overline{y}$ is intervention setting $X$ to $x$ in  all the possible contexts.  In this structure we can describe sufficient causality in terms of nearby contexts by replacing the clause $SC3$ in the Definition \ref{def:sufficient causality Halpern} by the condition 
\begin{center}
  $ u R u' \quad \implies (\mathcal{K}, \overline{t}, u') \models  [\overline{Y} \leftarrow \overline{y}]  \alpha \quad \text{or equivalently} \quad   (\mathcal{K}, \overline{t}, u) \models \Box  [\overline{Y} \leftarrow \overline{y}]  \alpha. $  
\end{center}
 We call this property $SC3$-local as we  only  require that the intervention  $[\overline{Y} \leftarrow \overline{y}]$ makes $\alpha$ true in nearby (not all) contexts. 
 
In this Section, we have mainly only considered the causal Kripke model with only one relation which is the "nearby" relation. However, we can also consider causal Kripke models with multiple relations in which we have a nearby relation $N$ on the worlds in addition to the other accessibility relations denoting relationships between world like time, indistingushibility, etc. The definition of sufficient  causality discussed above  can naturally be extended to this setting allowing us to describe the sufficient causality in the setting of causal Kripke models. Here,  we do not go into details of this generalization but we believe this would be an interesting direction for future research.


\section{Conclusions and future directions}\label{sec:conclusion}
In this paper, we have developed a possible world semantics for reasoning about actual causality. We develop a modal language and logic  to formally reason in this framework. This language is used to generalize the HP definitions of actual causality for this framework. We provide a sound and complete axiomatization of the modal logic of causality developed, and give a number of examples to illustrate how our model can be used to reason about causality. Finally, we show that our  framework allows us to define the intended notion of sufficient causality in a straightforward and more intuitive manner. 

This work can be extended in several directions. First,  results regarding the relationship of the HP definitions with but-for causality \cite[Proposition 2.2.2]{halpern2016actual},  and  transitivity of cause \cite[Section 2.4]{halpern2016actual}, can be generalized to our  modal setting.  Secondly, we can allow for interventions on the relation $R$ in causal Kripke models. Indeed, in many scenarios intuitively the cause for some event is accessibility to some possible world. Allowing interventions on $R$ would allow us to model such scenarios. Finally, similar to sufficient causality, other notions related to actual causality like normality (or typicality) and graded causation can be described in more nuanced and possibly multiple ways  using the causal modal language. 

\bibliographystyle{eptcs}
\bibliography{generic}

\begin{thebibliography}{10}
\providecommand{\bibitemdeclare}[2]{}
\providecommand{\surnamestart}{}
\providecommand{\surnameend}{}
\providecommand{\urlprefix}{Available at }
\providecommand{\url}[1]{\texttt{#1}}
\providecommand{\href}[2]{\texttt{#2}}
\providecommand{\urlalt}[2]{\href{#1}{#2}}
\providecommand{\doi}[1]{doi:\urlalt{https://doi.org/#1}{#1}}
\providecommand{\eprint}[1]{arXiv:\urlalt{https://arxiv.org/abs/#1}{#1}}
\providecommand{\bibinfo}[2]{#2}

\bibitemdeclare{inproceedings}{baier2021verification}
\bibitem{baier2021verification}
\bibinfo{author}{Christel \surnamestart Baier\surnameend},
  \bibinfo{author}{Clemens \surnamestart Dubslaff\surnameend},
  \bibinfo{author}{Florian \surnamestart Funke\surnameend},
  \bibinfo{author}{Simon \surnamestart Jantsch\surnameend},
  \bibinfo{author}{Rupak \surnamestart Majumdar\surnameend},
  \bibinfo{author}{Jakob \surnamestart Piribauer\surnameend} \&
  \bibinfo{author}{Robin \surnamestart Ziemek\surnameend}
  (\bibinfo{year}{2021}): \emph{\bibinfo{title}{From Verification to
  Causality-Based Explications}}.
\newblock In: {\slshape \bibinfo{booktitle}{48th International Colloquium on
  Automata, Languages, and Programming (ICALP 2021)}}, \bibinfo{volume}{198},
  \bibinfo{organization}{Schloss Dagstuhl--Leibniz-Zentrum f $\{$$\backslash$"
  u$\}$ r Informatik}, p.~\bibinfo{pages}{1},
  \doi{10.4230/LIPIcs.ICALP.2021.1}.

\bibitemdeclare{article}{barberoobserving}
\bibitem{barberoobserving}
\bibinfo{author}{Fausto \surnamestart Barbero\surnameend},
  \bibinfo{author}{Katrin \surnamestart Schulz\surnameend},
  \bibinfo{author}{Fernando~R \surnamestart Vel{\'a}zquez-Quesada\surnameend}
  \& \bibinfo{author}{Kaibo \surnamestart Xie\surnameend}:
  \emph{\bibinfo{title}{Observing interventions: a logic for thinking about
  experiments}}.
\newblock {\slshape \bibinfo{journal}{Journal of Logic and Computation}},
  \doi{10.1093/logcom/exac011}.

\bibitemdeclare{inproceedings}{beckers2022causal}
\bibitem{beckers2022causal}
\bibinfo{author}{Sander \surnamestart Beckers\surnameend}
  (\bibinfo{year}{2022}): \emph{\bibinfo{title}{Causal explanations and XAI}}.
\newblock In: {\slshape \bibinfo{booktitle}{Conference on Causal Learning and
  Reasoning}}, \bibinfo{organization}{PMLR}, pp. \bibinfo{pages}{90--109},
  \doi{10.48550/arXiv.2201.13169}.

\bibitemdeclare{article}{beer2012explaining}
\bibitem{beer2012explaining}
\bibinfo{author}{Ilan \surnamestart Beer\surnameend}, \bibinfo{author}{Shoham
  \surnamestart Ben-David\surnameend}, \bibinfo{author}{Hana \surnamestart
  Chockler\surnameend}, \bibinfo{author}{Avigail \surnamestart Orni\surnameend}
  \& \bibinfo{author}{Richard \surnamestart Trefler\surnameend}
  (\bibinfo{year}{2012}): \emph{\bibinfo{title}{Explaining counterexamples
  using causality}}.
\newblock {\slshape \bibinfo{journal}{Formal Methods in System Design}}
  \bibinfo{volume}{40}, pp. \bibinfo{pages}{20--40},
  \doi{10.1007/s10703-011-0132-2}.

\bibitemdeclare{article}{bergstein2020ai}
\bibitem{bergstein2020ai}
\bibinfo{author}{Brian \surnamestart Bergstein\surnameend}
  (\bibinfo{year}{2020}): \emph{\bibinfo{title}{What AI still can’t do}}.
\newblock {\slshape \bibinfo{journal}{MIT Technol Rev}}
  \bibinfo{volume}{123}(\bibinfo{number}{2}), pp. \bibinfo{pages}{1--7}.
\newblock
  \urlprefix\url{https://www.technologyreview.com/2020/02/19/868178/what-ai-still-cant-do/}.

\bibitemdeclare{inproceedings}{caltais2016composing}
\bibitem{caltais2016composing}
\bibinfo{author}{Georgiana \surnamestart Caltais\surnameend},
  \bibinfo{author}{Stefan \surnamestart Leue\surnameend} \&
  \bibinfo{author}{Mohammad~Reza \surnamestart Mousavi\surnameend}
  (\bibinfo{year}{2016}): \emph{\bibinfo{title}{(De-) Composing Causality in
  Labeled Transition Systems}}.
\newblock In: {\slshape \bibinfo{booktitle}{The 1st Workshop on Causal
  Reasoning for Embedded and safety-critical Systems Technologies, Eindhoven,
  The Netherlands, April 8, 2016}}, \bibinfo{volume}{224},
  \bibinfo{organization}{Open Publishing Association}, pp.
  \bibinfo{pages}{10--24}, \doi{10.4204/EPTCS.224.3}.

\bibitemdeclare{article}{caltais2020causal}
\bibitem{caltais2020causal}
\bibinfo{author}{Georgiana \surnamestart Caltais\surnameend},
  \bibinfo{author}{Mohammad~Reza \surnamestart Mousavi\surnameend} \&
  \bibinfo{author}{Hargurbir \surnamestart Singh\surnameend}
  (\bibinfo{year}{2020}): \emph{\bibinfo{title}{Causal reasoning for safety in
  Hennessy Milner logic}}.
\newblock {\slshape \bibinfo{journal}{Fundamenta Informaticae}}
  \bibinfo{volume}{173}(\bibinfo{number}{2-3}), pp. \bibinfo{pages}{217--251},
  \doi{10.1007/s10849-022-09357-y}.

\bibitemdeclare{article}{chockler2008causes}
\bibitem{chockler2008causes}
\bibinfo{author}{Hana \surnamestart Chockler\surnameend},
  \bibinfo{author}{Joseph~Y \surnamestart Halpern\surnameend} \&
  \bibinfo{author}{Orna \surnamestart Kupferman\surnameend}
  (\bibinfo{year}{2008}): \emph{\bibinfo{title}{What causes a system to satisfy
  a specification?}}
\newblock {\slshape \bibinfo{journal}{ACM Transactions on Computational Logic
  (TOCL)}} \bibinfo{volume}{9}(\bibinfo{number}{3}), pp.
  \bibinfo{pages}{1--26}, \doi{10.1145/1352582.1352588}.

\bibitemdeclare{inproceedings}{datta2015program}
\bibitem{datta2015program}
\bibinfo{author}{Anupam \surnamestart Datta\surnameend},
  \bibinfo{author}{Deepak \surnamestart Garg\surnameend},
  \bibinfo{author}{Dilsun \surnamestart Kaynar\surnameend},
  \bibinfo{author}{Divya \surnamestart Sharma\surnameend} \&
  \bibinfo{author}{Arunesh \surnamestart Sinha\surnameend}
  (\bibinfo{year}{2015}): \emph{\bibinfo{title}{Program actions as actual
  causes: A building block for accountability}}.
\newblock In: {\slshape \bibinfo{booktitle}{2015 IEEE 28th Computer Security
  Foundations Symposium}}, \bibinfo{organization}{IEEE}, pp.
  \bibinfo{pages}{261--275}, \doi{10.1109/CSF.2015.25}.

\bibitemdeclare{article}{dawid2007fundamentals}
\bibitem{dawid2007fundamentals}
\bibinfo{author}{A~Philip \surnamestart Dawid\surnameend} et~al.
  (\bibinfo{year}{2007}): \emph{\bibinfo{title}{Fundamentals of statistical
  causality}}.
\newblock {\slshape \bibinfo{journal}{RSS/EPSRC Grad Train Progr}}
  \bibinfo{volume}{279}, pp. \bibinfo{pages}{1--94}.
\newblock
  \urlprefix\url{https://www.semanticscholar.org/paper/FUNDAMENTALS-OF-STATISTICAL-CAUSALITY-Dawid/c4bcad0bb58091ecf9204ddb5db7dce749b0d461}.

\bibitemdeclare{article}{dubois2020glance}
\bibitem{dubois2020glance}
\bibinfo{author}{Didier \surnamestart Dubois\surnameend} \&
  \bibinfo{author}{Henri \surnamestart Prade\surnameend}
  (\bibinfo{year}{2020}): \emph{\bibinfo{title}{A glance at causality theories
  for artificial intelligence}}.
\newblock {\slshape \bibinfo{journal}{A Guided Tour of Artificial Intelligence
  Research: Volume I: Knowledge Representation, Reasoning and Learning}}, pp.
  \bibinfo{pages}{275--305}, \doi{10.1007/978-3-030-06164-7_9}.

\bibitemdeclare{article}{falcon2006aristotle}
\bibitem{falcon2006aristotle}
\bibinfo{author}{Andrea \surnamestart Falcon\surnameend}
  (\bibinfo{year}{2006}): \emph{\bibinfo{title}{Aristotle on causality}}.
\newblock
  \urlprefix\url{https://plato.stanford.edu/entries/aristotle-causality/}.

\bibitemdeclare{article}{glymour2007actual}
\bibitem{glymour2007actual}
\bibinfo{author}{Clark \surnamestart Glymour\surnameend} \&
  \bibinfo{author}{Frank \surnamestart Wimberly\surnameend}
  (\bibinfo{year}{2007}): \emph{\bibinfo{title}{Actual causes and thought
  experiments}}.
\newblock {\slshape \bibinfo{journal}{Causation and explanation}}
  \bibinfo{volume}{4}, p.~\bibinfo{pages}{43}.
\newblock
  \urlprefix\url{https://www.semanticscholar.org/paper/Actual-Causes-and-Thought-Experiments-Glymour-Wimberly/d17f5a2896cf38b06e13713042e655e33753a69b}.

\bibitemdeclare{article}{hall2007structural}
\bibitem{hall2007structural}
\bibinfo{author}{Ned \surnamestart Hall\surnameend} (\bibinfo{year}{2007}):
  \emph{\bibinfo{title}{Structural equations and causation}}.
\newblock {\slshape \bibinfo{journal}{Philosophical Studies}}
  \bibinfo{volume}{132}, pp. \bibinfo{pages}{109--136},
  \doi{10.1007/s11098-006-9057-9}.

\bibitemdeclare{article}{halpern2015modification}
\bibitem{halpern2015modification}
\bibinfo{author}{Joseph~Y \surnamestart Halpern\surnameend}
  (\bibinfo{year}{2015}): \emph{\bibinfo{title}{A modification of the
  Halpern-Pearl definition of causality}}.
\newblock {\slshape \bibinfo{journal}{arXiv preprint arXiv:1505.00162}},
  \doi{10.48550/arXiv.1505.00162}.

\bibitemdeclare{book}{halpern2016actual}
\bibitem{halpern2016actual}
\bibinfo{author}{Joseph~Y \surnamestart Halpern\surnameend}
  (\bibinfo{year}{2016}): \emph{\bibinfo{title}{Actual causality}}.
\newblock \bibinfo{publisher}{MiT Press},
  \doi{10.7551/mitpress/10809.001.0001}.

\bibitemdeclare{article}{halpern2005causes}
\bibitem{halpern2005causes}
\bibinfo{author}{Joseph~Y \surnamestart Halpern\surnameend} \&
  \bibinfo{author}{Judea \surnamestart Pearl\surnameend}
  (\bibinfo{year}{2005}): \emph{\bibinfo{title}{Causes and explanations: A
  structural-model approach. Part I: Causes}}.
\newblock {\slshape \bibinfo{journal}{The British journal for the philosophy of
  science}}, \doi{10.1093/bjps/axi147}.

\bibitemdeclare{article}{halpern2005causesII}
\bibitem{halpern2005causesII}
\bibinfo{author}{Joseph~Y \surnamestart Halpern\surnameend} \&
  \bibinfo{author}{Judea \surnamestart Pearl\surnameend}
  (\bibinfo{year}{2005}): \emph{\bibinfo{title}{Causes and explanations: A
  structural-model approach. Part II: Explanations}}.
\newblock {\slshape \bibinfo{journal}{The British journal for the philosophy of
  science}}, \doi{10.1093/bjps/axi148}.

\bibitemdeclare{article}{hitchcock2001tale}
\bibitem{hitchcock2001tale}
\bibinfo{author}{Christopher \surnamestart Hitchcock\surnameend}
  (\bibinfo{year}{2001}): \emph{\bibinfo{title}{A tale of two effects}}.
\newblock {\slshape \bibinfo{journal}{The Philosophical Review}}
  \bibinfo{volume}{110}(\bibinfo{number}{3}), pp. \bibinfo{pages}{361--396},
  \doi{10.1215/00318108-110-3-361}.

\bibitemdeclare{article}{hitchcock2013cause}
\bibitem{hitchcock2013cause}
\bibinfo{author}{Christopher \surnamestart Hitchcock\surnameend}
  (\bibinfo{year}{2013}): \emph{\bibinfo{title}{What is the ‘cause’in
  causal decision theory?}}
\newblock {\slshape \bibinfo{journal}{Erkenntnis}} \bibinfo{volume}{78}, pp.
  \bibinfo{pages}{129--146}, \doi{10.1007/s10670-013-9440-9}.

\bibitemdeclare{article}{hume2007enquiry}
\bibitem{hume2007enquiry}
\bibinfo{author}{David \surnamestart Hume\surnameend} (\bibinfo{year}{1748}):
  \emph{\bibinfo{title}{An enquiry concerning human understanding and other
  writings}}.

\bibitemdeclare{incollection}{ibrahim2020actual}
\bibitem{ibrahim2020actual}
\bibinfo{author}{Amjad \surnamestart Ibrahim\surnameend},
  \bibinfo{author}{Tobias \surnamestart Klesel\surnameend},
  \bibinfo{author}{Ehsan \surnamestart Zibaei\surnameend},
  \bibinfo{author}{Severin \surnamestart Kacianka\surnameend} \&
  \bibinfo{author}{Alexander \surnamestart Pretschner\surnameend}
  (\bibinfo{year}{2020}): \emph{\bibinfo{title}{Actual causality canvas: a
  general framework for explanation-based socio-technical constructs}}.
\newblock In: {\slshape \bibinfo{booktitle}{ECAI 2020}},
  \bibinfo{publisher}{IOS Press}, pp. \bibinfo{pages}{2978--2985},
  \doi{10.3233/FAIA200472}.

\bibitemdeclare{inproceedings}{10.1007/978-3-642-35873-9}
\bibitem{10.1007/978-3-642-35873-9}
\bibinfo{author}{Florian \surnamestart Leitner-Fischer\surnameend} \&
  \bibinfo{author}{Stefan \surnamestart Leue\surnameend}
  (\bibinfo{year}{2013}): \emph{\bibinfo{title}{Causality Checking for Complex
  System Models}}.
\newblock In \bibinfo{editor}{Roberto \surnamestart Giacobazzi\surnameend},
  \bibinfo{editor}{Josh \surnamestart Berdine\surnameend} \&
  \bibinfo{editor}{Isabella \surnamestart Mastroeni\surnameend}, editors:
  {\slshape \bibinfo{booktitle}{Verification, Model Checking, and Abstract
  Interpretation}}, \bibinfo{publisher}{Springer Berlin Heidelberg},
  \bibinfo{address}{Berlin, Heidelberg}, pp. \bibinfo{pages}{248--267},
  \doi{10.1007/978-3-642-35873-9_16}.

\bibitemdeclare{article}{lewis1974causation}
\bibitem{lewis1974causation}
\bibinfo{author}{David \surnamestart Lewis\surnameend} (\bibinfo{year}{1973}):
  \emph{\bibinfo{title}{Causation}}.
\newblock {\slshape \bibinfo{journal}{The journal of philosophy}}
  \bibinfo{volume}{70}(\bibinfo{number}{17}), pp. \bibinfo{pages}{556--567},
  \doi{10.2307/2025310}.

\bibitemdeclare{article}{lewis2004causation}
\bibitem{lewis2004causation}
\bibinfo{author}{David \surnamestart Lewis\surnameend} (\bibinfo{year}{2004}):
  \emph{\bibinfo{title}{Causation as influence}}.
\newblock {\slshape \bibinfo{journal}{The Journal of Philosophy}}
  \bibinfo{volume}{97}(\bibinfo{number}{4}), pp. \bibinfo{pages}{182--197},
  \doi{10.7551/mitpress/1752.003.0004}.

\bibitemdeclare{article}{meliou2010causality}
\bibitem{meliou2010causality}
\bibinfo{author}{Alexandra \surnamestart Meliou\surnameend},
  \bibinfo{author}{Wolfgang \surnamestart Gatterbauer\surnameend},
  \bibinfo{author}{Joseph~Y \surnamestart Halpern\surnameend},
  \bibinfo{author}{Christoph \surnamestart Koch\surnameend},
  \bibinfo{author}{Katherine~F \surnamestart Moore\surnameend} \&
  \bibinfo{author}{Dan \surnamestart Suciu\surnameend} (\bibinfo{year}{2010}):
  \emph{\bibinfo{title}{Causality in databases}}.
\newblock {\slshape \bibinfo{journal}{IEEE Data Engineering Bulletin}}
  \bibinfo{volume}{33}(\bibinfo{number}{ARTICLE}), pp. \bibinfo{pages}{59--67}.
\newblock
  \urlprefix\url{https://www.cs.cornell.edu/home/halpern/papers/DE_Bulletin2010.pdf}.

\bibitemdeclare{book}{moore2009causation}
\bibitem{moore2009causation}
\bibinfo{author}{Michael~S \surnamestart Moore\surnameend}
  (\bibinfo{year}{2009}): \emph{\bibinfo{title}{Causation and responsibility:
  An essay in law, morals, and metaphysics}}.
\newblock \bibinfo{publisher}{Oxford University Press on Demand},
  \doi{10.1093/acprof:oso/9780199256860.001.0001}.

\bibitemdeclare{book}{pearl2009causality}
\bibitem{pearl2009causality}
\bibinfo{author}{Judea \surnamestart Pearl\surnameend} (\bibinfo{year}{2009}):
  \emph{\bibinfo{title}{Causality}}.
\newblock \bibinfo{publisher}{Cambridge university press},
  \doi{10.1017/CBO9780511803161}.

\bibitemdeclare{book}{pearl2018book}
\bibitem{pearl2018book}
\bibinfo{author}{Judea \surnamestart Pearl\surnameend} \& \bibinfo{author}{Dana
  \surnamestart Mackenzie\surnameend} (\bibinfo{year}{2018}):
  \emph{\bibinfo{title}{The book of why: the new science of cause and effect}}.
\newblock \bibinfo{publisher}{Basic books},
  \doi{10.1080/01621459.2020.1721245}.

\bibitemdeclare{incollection}{scholkopf2022causality}
\bibitem{scholkopf2022causality}
\bibinfo{author}{Bernhard \surnamestart Sch{\"o}lkopf\surnameend}
  (\bibinfo{year}{2022}): \emph{\bibinfo{title}{Causality for machine
  learning}}.
\newblock In: {\slshape \bibinfo{booktitle}{Probabilistic and Causal Inference:
  The Works of Judea Pearl}}, pp. \bibinfo{pages}{765--804},
  \doi{10.1145/3501714.3501755}.

\bibitemdeclare{book}{woodward2005making}
\bibitem{woodward2005making}
\bibinfo{author}{James \surnamestart Woodward\surnameend}
  (\bibinfo{year}{2005}): \emph{\bibinfo{title}{Making things happen: A theory
  of causal explanation}}.
\newblock \bibinfo{publisher}{Oxford university press},
  \doi{10.1111/j.1933-1592.2007.00012.x}.

\end{thebibliography}
\appendix
\section{HP definitions of causality and sufficient causality} \label{ssec:HP definitions basic}
\begin{definition}[\cite{halpern2016actual}, Definition 2.2.1]\label{def: causality original}
Let $\alpha$ be an event obtained by Boolean combination of primitive events. Let $\mathcal{X} \subseteq \mathcal{V}$ be a set of endogenous variables.  $\overline{X} = \overline{x} $ is an \textbf{actual cause}  of $\alpha$ in a causal setting $(M, \overline{t})$ if the following conditions hold.
\begin{itemize}
    \item[AC1.] $(M, \overline{t}) \Vdash \overline{X} = \overline{x}$ and $(M, \overline{t}) \Vdash \alpha$. 
    \item[AC2a.] There is a partition of $\mathcal{V}$ into two disjoint subsets  $\overline{Z}$ and  $\mathcal{W}$ with $ \overline{X} \subseteq  \overline{Z}$ and a setting $\overline{x}'$ and $\overline{w}$ of variables in $\mathcal{X}$ and $\mathcal{W}$, respectively, such that 
    \[
    (M, \overline{t}) \Vdash_p [\overline{X} \leftarrow \overline{x}', \overline{W} \leftarrow \overline{w} ] \neg \alpha.
    \]
     \item[AC2$b^o$.] If $\overline{z}^\ast $  is such that $(M, \overline{t}) \Vdash \overline{Z}=\overline{z}^\ast $, then for all subsets $\overline{Z}'$ of $\overline{Z} \setminus \mathcal{X}$ we have 
     \[
     (M, \overline{t}) \Vdash_p [\overline{X} \leftarrow \overline{x}, \overline{W} \leftarrow \overline{w}, \overline{Z}' \leftarrow \overline{z}^\ast ]  \alpha.
     \]
     \item[AC3.] $\mathcal{X}$ is minimal set of variable that satisfy AC1 and AC2.
\end{itemize}
We say that $\overline{X} = \overline{x} $ is an {\em actual cause}  of $\alpha$ in  a causal setting $(M, \overline{t})$ by {\em updated definition} iff AC1, AC2a, AC3 hold and AC2$b^o$ is replaced by the following condition.
\item[AC2$b^u$.] If $\overline{z}^\ast $ is such that $(M, \overline{t}) \Vdash \overline{Z}=\overline{z}^\ast $, then for all subsets $\overline{Z}'$   of $\overline{Z} \setminus \mathcal{X}$ and $\mathcal{W}'$   of $\mathcal{W}$  we have 
     \[
     (M, \overline{t}) \Vdash_p [\overline{X} \leftarrow \overline{x}, \overline{W}' \leftarrow \overline{w}, \overline{Z}' \leftarrow \overline{z}^\ast ]  \alpha.
     \]
We say that $\overline{X} = \overline{x} $ is an {\em actual cause}  of $\alpha$ in  a causal setting $(M, \overline{t})$ by {\em modified definition} iff AC1, AC3 hold and AC2 is replaced by the following condition.
\item[$AC2a^m$.] If there exists a set of variables $\mathcal{W} \subseteq \mathcal{V}$, and  a setting  $\overline{x}'$ of variable in $\overline{X}$ such that if $ (M, \overline{t}) \Vdash \overline{W} = \overline{w}^\ast   $, then 
     \[
     (M, \overline{t}) \Vdash_p [\overline{X} \leftarrow \overline{x}', \overline{W} \leftarrow \overline{w}^\ast ]  \neg\alpha.
     \]
\end{definition}
The following theorem describes relationship between these three definitions of causality.

\begin{definition}[\cite{halpern2016actual}, Definition 2.6.1]\label{def:sufficient causality Halpern}
 $\overline{X}=\overline{x}$ is a sufficient cause of $\alpha$ in the causal setting $(M, \overline{u})$ if the following
conditions hold:
\begin{itemize}
    \item [SC1.]  $(M, \overline{u}) \models\overline{X}=\overline{x}$ and $(M, \overline{u}) \models \alpha$. 
    \item [SC2.]  Some conjunct of $\overline{X}=\overline{x}$ is part of a cause of $\alpha$  in  $(M, \overline{u})$. More precisely, there exists a conjunct $X=x$ of $\overline{X}=\overline{x}$  and another (possibly empty) conjunction $\overline{Y}=\overline{y}$ such that $X =x \wedge  \overline{Y}=\overline{y} $ is a cause of $\alpha$ in  $(M, \overline{u})$; i.e.~, AC1, AC2, and AC3 hold for (possibly empty) conjunction $\overline{Y}=\overline{y}$ such that $X =x \wedge  \overline{Y}=\overline{y} $ 
     \item [SC3.] $(M, \overline{u}') \models  [\overline{X} \leftarrow \overline{x}]  \alpha$ for all contexts $\overline{u}'$. 
     \item $\overline{X}$ is the minimal set satisfying above properties. 
\end{itemize}
\end{definition}
 \section{Relationship between three  HP definitions of causality}
 In \cite{halpern2016actual} Halpern gives examples to show that the  three HP definitions do not coincide with each other. Here we give one example to show that  the  modified definition may not coincide with the original and updated definition in the causal Kripke model. Similar example can be given to show that original and updated definition do not coincide. 
 
 \begin{example}[stalemate detailed]\label{ex:stalemate revisited}
We consider the following variation of the example \ref{ex:stalemate}. Let $\mathcal{S}$  be signature with endogenous variables $p_1$ and $p_2$ instead of $p$ (keeping the other  endogenous  variables unchanged) standing for 'The king is in check by the opponent's queen' and 'The king is in check by the opponent's king'. Let $U =(U_1,U_2,U_3)  \in \{0,1\}^3$ be  such that  $(p_1,w)=(U_1,w)$, $(p_2,w)=(U_2,w)$, and  $(q,w)=(U_3,w)$ for  any $w \in W$.  Let the structural equation for $r$ at $w_0$ be given by $r = \neg (p_1 \vee p_2) \wedge q \wedge \Box (p_1 \vee p_2)$. i.e.~, the player is forced to move the knoight if  if  the only pieces that can move are the king and the knight, ther king  is not in check  and every possible king move leads to king being in the  check  by the king or the queen (We assume there are no other pieces on the board) of the opponenet . Let $\overline{t}$ be a context such that $U$ is set to be $(0,0,1)$,$(1,1,1)$, and  $(0,1,0)$ at the worlds  $w_0$, $w_1$, and  $w_2$   respectively. We have $\overline{t} (w_0)= \{r\}$, $\overline{t} (w_1)= \{p_1,p_2,q\}$, $\overline{t} (w_2)= \{p_2\}$.  In the same way as the last example we can show that $(p_1,w_0)=0$, $(p_2,w_0)=0$,$(q,w_0)=1$, $(p_1,w_1)=1$, $(p_2,w_1)=1$ and  $(p_2,w_2)=1$  are  all the  causes of $r=1$ at $w_0$ by the original and updated definition. However, in the case of modified definition, neither $(p_1,w_1)=1$ nor $(p_2,w_1)=1$ is the causes but $(p_1,w_1)=1 \wedge (p_2,w_1)=1$ is a cause of  $r=1$ at $w_0$. To see this notice that for any choice of $\overline{N}$  we will always have 
  \begin{center}
   $ (\mathcal{K},\overline{t},w_0) \Vdash [(p_1,w_1) \leftarrow {x}', \overline{N} \leftarrow \overline{n}^\ast ]  r=1.$   
  \end{center}
     for any choice of ${x}'$. Thus,  $(p_1,w_1)=1$ is not cause of $r=1$ at $w_0$ by modified definition. Similar argument holds for  $(p_2,w_1)=1$. However, $(p_1,w_1)=1 \wedge (p_2,w_1)=1$  is a cause is showed by setting $\overline{N}=\emptyset$ and $\overline{x}' =(0,0)$. Thus, three definitions of causality need not always match in causal Kripke models.
 \end{example}

The following theorem describes relationship between the three  HP definitions in the causal models. 
 \begin{definition}[\cite{halpern2016actual}, Section 2.2] \label{def:part of cause}
For any event $\alpha$, any variable $X$, and any world $w$, $X=x$ is a  part of cause of $\alpha$ by original (resp.~updated, modified) definition of causality if it is a conjunct in the cause of  $\alpha$ by original (resp.~updated, modified) definition. 
\end{definition}
\begin{theorem}[\cite{halpern2016actual}, Theorem 2.2.3]  \label{thm:reln between three defn}
If $X=x$ is a part of cause of $\alpha$ in $(M,\overline{u})$   according to 
 \begin{enumerate}
     \item the modified HP definition then $X=x$ is a part of cause of $\alpha$ in  $(M,\overline{u})$  according to the original HP definition .
      \item the modified HP definition then $X=x$ is a part of cause of $\alpha$ in $(M,\overline{u})$ according to the updated HP definition.
      \item  the updated HP definition then $X=x$ is a part of cause of $\alpha$ in $(M,\overline{u})$   according to the original HP definition.
 \end{enumerate}
\end{theorem}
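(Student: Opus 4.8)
The three definitions agree verbatim on clause AC1 and on the minimality clause AC3, and differ only in their second clause; moreover the second clauses are linearly ordered by logical strength, with AC2$a^m$ the strongest, AC2a together with AC2$b^u$ next, and AC2a together with AC2$b^o$ the weakest. My plan is therefore to treat the three implications uniformly: for each I would first show that whenever a candidate $\overline{X}=\overline{x}$ satisfies the stronger second clause it also satisfies the weaker one, and then show that the distinguished conjunct $X=x$ can be kept when the resulting candidate is minimised for the weaker definition.

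The first half rests on one observation about recursive models: intervening to fix a set of variables at the values they already take in the actual solution leaves that solution, and hence the truth value of every event, unchanged. Suppose $\overline{X}=\overline{x}$ satisfies AC2$a^m$, with witness set $\mathcal{W}$ frozen at its actual value $\overline{w}^\ast$ and off-setting $\overline{x}'$. Reading $\overline{w}=\overline{w}^\ast$ turns AC2$a^m$ into AC2a. Both AC2$b^o$ and AC2$b^u$ then hold for free: every intervention occurring in them assigns $\overline{X}$, part of $\mathcal{W}$, and a subset of $\overline{Z}$ their actual values, so by the observation $\alpha$, true by AC1, survives. This disposes of the second clause for implications (1) and (2). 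For (3), AC2a is the identical clause in both definitions, and AC2$b^u$ specialises to AC2$b^o$ by taking the subset $\mathcal{W}'$ to be all of $\mathcal{W}$.

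The second half is the real content, because a stronger cause need not remain minimal, and indeed need not be a cause at all, for the weaker definition: a single modified cause can break up into several smaller original causes, as happens for a disjunctive effect $\alpha=L\vee MD$, where $\{L=1, MD=1\}$ is the unique modified cause while $\{L=1\}$ and $\{MD=1\}$ are separate original causes. This is exactly why the statement speaks of \emph{part} of a cause rather than of the causes themselves. To locate a weaker cause around a fixed conjunct $X=x$ of a stronger cause $\overline{X}=\overline{x}$, the plan is to move the remaining conjuncts of $\overline{X}$ into the witness set and freeze them at their off-values from $\overline{x}'$: with this choice the weaker AC2a for $X=x$ reduces verbatim to the stronger counterfactual $[\overline{X}\leftarrow\overline{x}',\overline{W}\leftarrow\overline{w}^\ast]\neg\alpha$, so AC2a is immediate.

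The step I expect to be the main obstacle is completing this to a genuine weaker cause: one must verify the relevant AC2$b$ clause for the constructed witness, namely that restoring $X$ to its actual value while the other conjuncts sit at their off-values still yields $\alpha$, and then check minimality, i.e.\ that $X=x$ cannot be dropped. Both are where the minimality of $\overline{X}$ under the stronger definition has to be spent: the intended route is to argue that a failure of AC2$b$, or a successful dropping of $X=x$, could be converted, using the recursive-fixing observation again, into a proper subset of $\overline{X}$ satisfying the stronger second clause, contradicting AC3 for $\overline{X}$. Carrying out this conversion without disturbing the counterfactual value of $\alpha$ is the delicate combinatorial heart of the proof, and the three cases should then differ only in the bookkeeping of which variables are allowed to float in the witness.
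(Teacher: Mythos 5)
Your plan is essentially the same as the proof the paper gives (for the modal generalization, Theorem \ref{thm:reln between three defn modal}; the non-modal statement itself is imported from Halpern, but the argument is identical). In particular your central move --- absorbing the remaining conjuncts $\overline{X}_{-1}$ of the stronger cause into the witness set frozen at their off-values so that AC2a for the singleton $X=x$ is the stronger counterfactual verbatim, noting AC3 is then trivial, and converting any failure of the relevant AC2$b$ clause into a witness showing that $\overline{X}_{-1}=\overline{x}_{-1}$ already satisfies the stronger second clause, contradicting AC3 for $\overline{X}=\overline{x}$ --- is exactly the paper's argument, with the three items differing only in the bookkeeping you describe.
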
 
Now, we generalize this result to our framework of causal Kripke  models.

\begin{definition} \label{def:part of cause modal}
For any event $\alpha$, any variable $X$, and any world $w'$, $(X,w)=x$ is a  part of cause of $\alpha$ by original (resp.~updated, modified) definition of causality if it is a conjunct in the cause of  $\alpha$ at $w'$ by original (resp.~updated, modified) definition. 
\end{definition}

\begin{theorem}\label{thm:reln between three defn modal}
If $(X,w)=x$ is a part of cause of $\phi$ in $(\mathcal{K},\overline{t})$ at $w'$ according to 
 \begin{enumerate}
     \item the modified HP definition then $(X,w)=x$  is a part of cause of $\alpha$ in $(\mathcal{K},\overline{t})$ at $w'$ according to the original HP definition.
      \item the modified HP definition then $(X,w)=x$  is a part of cause of $\alpha$ in  $(\mathcal{K},\overline{t})$ at $w'$ according to the updated HP definition.
      \item  the updated HP definition then $(X,w)=x$  is a part of cause of $\alpha$ in  $(\mathcal{K},\overline{t})$ at $w'$ according to the original HP definition.
 \end{enumerate}
 \end{theorem}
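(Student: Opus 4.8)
The plan is to reduce the statement to the non-modal Theorem \ref{thm:reln between three defn} by ``flattening'' the causal Kripke setting, evaluated at the fixed world $w'$, into an ordinary causal setting whose endogenous variables are precisely the pairs in $\mathcal{V} \times W$. First I would define the flattened model $M^\flat$: its signature $\mathcal{S}^\flat$ has endogenous variables $\mathcal{V} \times W$, exogenous variables $\mathcal{U} \times W$, ranges inherited from $\mathcal{R}$, and structural equations $\mathcal{F}^\flat = (f_{(X_i, w_j)} \mid X_i \in \mathcal{V}, w_j \in W)$. The context $\overline{t}$ of the Kripke setting is already a context for $M^\flat$, and since $\mathcal{K}$ is recursive so is $M^\flat$; hence the values of all endogenous variables of $M^\flat$ are determined by $\overline{t}$ and agree with the values $\overline{t}$ assigns across worlds in $\mathcal{K}$. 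Crucially, an intervention $[\overline{Y} \leftarrow \overline{y}]$ with $\overline{Y} \subseteq \mathcal{V} \times W$ acts identically on $\mathcal{K}$ and on $M^\flat$, since in both cases it merely replaces the relevant structural equations and leaves $R$ untouched.

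Next I would introduce a translation $\alpha \mapsto \alpha^\flat_{w'}$ sending each event $\alpha$ of $\mathrm{L}_M$ to a modality-free Boolean combination of primitive events of the form $(X,w)=x$, defined by recursion on the structure of $\alpha$: the atom $X=x$ evaluated at $w'$ becomes $(X,w')=x$; the atom $(X,w)=x$ is left unchanged; Boolean connectives are preserved; and $\Box\beta$ at $w'$ becomes the finite conjunction $\bigwedge \{\, \beta^\flat_{w''} \mid w' R w'' \,\}$, which is well defined because $W$ is finite. Note that this recursion is well founded on the modal depth of the formula (each $\Box$ strips one modality and recurses on the strictly smaller subformula $\beta$), so it terminates even when $R$ contains cycles. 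The key lemma to prove is that the translation faithfully captures satisfaction and, most importantly, commutes with interventions:
\[
(\mathcal{K}_{[\overline{Y} \leftarrow \overline{y}]}, \overline{t}, w') \Vdash \alpha \quad \Longleftrightarrow \quad (M^\flat_{[\overline{Y} \leftarrow \overline{y}]}, \overline{t}) \Vdash \alpha^\flat_{w'}.
\]
Commutation holds because interventions never alter $R$, so the unfolding of each $\Box$ into a conjunction over $R$-successors is the same before and after any intervention; the atomic and Boolean cases then follow by a routine induction.

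With this dictionary in hand, I would verify that each clause of the Kripke-style definitions of actual cause at $w'$ (AC1, AC2a, AC2$b^o$, AC2$b^u$, AC3, AC2$a^m$ of Definition \ref{def: causality original}) coincides clause-by-clause with the corresponding clause of the non-modal definition applied to $M^\flat$ and the translated event $\alpha^\flat_{w'}$. This is essentially bookkeeping: the Kripke definitions already quantify over partitions and assignments of $\mathcal{V} \times W$, and every satisfaction statement occurring in them is of the intervention form covered by the translation lemma. It follows that $\overline{Y} = \overline{y}$ is a cause of $\alpha$ at $w'$ in $(\mathcal{K}, \overline{t})$ by the original (resp.\ updated, modified) definition if and only if it is a cause of $\alpha^\flat_{w'}$ in $(M^\flat, \overline{t})$ by the same definition, whence the notion ``$(X,w)=x$ is a part of cause'' transfers verbatim, $(X,w)$ being a single endogenous variable of $M^\flat$.

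Finally, applying Theorem \ref{thm:reln between three defn} to $M^\flat$ and $\alpha^\flat_{w'}$ yields all three implications at once. The hard part will be the translation lemma, specifically the care needed to check that the modalities unfold consistently under interventions and that the recursion is well founded; once that is established, every remaining step is a direct verification that the modal and non-modal families of definitions agree after flattening, so no genuinely new causal reasoning is required beyond the already-proved non-modal theorem.
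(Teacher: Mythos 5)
Your proposal is correct in outline but takes a genuinely different route from the paper. You reduce the modal statement to the non-modal Theorem \ref{thm:reln between three defn} by flattening the causal Kripke model into an ordinary causal model over the variable set $\mathcal{V}\times W$ and translating each event $\alpha$ at $w'$ into a modality-free Boolean combination $\alpha^\flat_{w'}$ (unfolding each $\Box$ into a finite conjunction over $R$-successors, which is legitimate since $W$ is finite and interventions never touch $R$). The paper instead re-runs Halpern's original argument directly inside the Kripke framework: given a modified-definition cause $\overline{Y}=\overline{y}$ containing the conjunct $(X,w)=x$, it exhibits explicit witnesses (taking $\overline{N}'=(\overline{Y}_{-1},\overline{N})$ and the corresponding settings) for AC2a of the original definition, and then derives a contradiction with the minimality clause AC3 if AC2$b^o$ (resp.\ AC2$b^u$) were to fail. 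Your reduction is more modular --- it inherits the combinatorial minimality argument for free from the non-modal theorem rather than repeating it --- but it shifts the burden onto the claim that the modal clauses AC1--AC3 are \emph{verbatim} the flattened instances of the non-modal ones; this does hold (the antecedents of the form ``$(\mathcal{K},\overline{t},w')\Vdash X=z^\ast$ for every $(X,w')=z^\ast\in\overline{Z}=\overline{z}^\ast$'' are world-independent, and both families of definitions quantify over the same partitions of $\mathcal{V}\times W$), but it is exactly the bookkeeping you flag and should be spelled out, since any mismatch there would silently break the transfer of ``part of cause.'' The paper's direct proof avoids this dependence and makes visible where each clause, in particular AC3, is actually used.
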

 
 \begin{proof}
 For item 1, let $(X,w)=x$ be a part of cause of $\alpha$ in $(\mathcal{K},\overline{t})$  at a world $w'$ according to the modified HP definition, so that there is a cause $\overline{Y}=\overline{y}$ such that $(X,w)=x$ is one of its conjuncts. Then there must exist a value $\overline{x}' \in \mathcal{R}(\overline{Y})$  and a set $\overline{N} \subseteq \mathcal{V} \times W \setminus \overline{Y}$, such that if $(\mathcal{K},\overline{t},w) \vdash  X= n^\ast$ for every $(X,w)=n^\ast \in \overline{N}=\overline{n}^\ast  $,  then  $(\mathcal{K},\overline{t},w') \vdash [\overline{Y} \leftarrow \overline{y}, \overline{N} \leftarrow \overline{n}^\ast]\neg \alpha$. Moreover $\overline{Y}$ is minimal.
 
 We will  show that $(X,w)=x$ is a cause of $\alpha$. If $\overline{Y} =\{(X,w)\}$, then the original HP definition is satisfied by $(\overline{N}, \overline{n}^\ast, x')$ given by the condition $AC2a^m$. If $|\overline{Y}|>1$, then without loss of generality let $\overline{Y} = ((X_1,w_1), (X_2,w_2), \cdots, (X_n,w_n))$ and $(X,w)=(X_1,w_1)$. For any vector $\overline{Y}$, we use $\overline{Y}_{-1}$ to denote all components of $Y$ except the first. 
 
 We will show that $(X_1,w_1)$ is a cause of $\alpha$ in $(\mathcal{K},\overline{t})$ at $w'$ according to the original definition. Since $\overline{Y}=\overline{y}$ is a cause of $\alpha$ in $(\mathcal{K},\overline{t})$ at $w'$ according to the modified definition, by AC1 $(\mathcal{K},\overline{t}, w_1) \vdash X_1=x_1$ and $(\mathcal{K},\overline{t}, w') \vdash \alpha$. Let $\overline{N}' = (\overline{Y}_{-1},\overline{N})$, $\overline{n^\ast}' = (\overline{y'}_{-1},\overline{n^\ast})$, $y'=y_1'$, where $\overline{y}'$ is as given by the modified definition. It is easy to see that $(\mathcal{K},\overline{t},w')\vdash [(X_1,w_1) \leftarrow x_1', \overline{Y}_{-1} \leftarrow \overline{y}_{-1}, \overline{N} \leftarrow \overline{n^\ast}] \neg \alpha$ satisfying condition $AC2a$. Since $(X_1,w_1)$ is single variable, $AC3$ holds trivially. Thus, to complete the proof of $(a)$ we need to show that $AC2b^o$ holds. Suppose $AC2b^o$  does not hold. Then there exists a subset $\overline{Z'} \subseteq \mathcal{V}\times W \setminus (\overline{Y}_{-1} \cup \overline{N})$ of variables  and value $z^\ast$ such that (i) for each $Z \in \overline{Z'}$, $(\mathcal{K},\overline{t},w)\vdash Z=z^\ast$ and (ii) $(\mathcal{K},\overline{t},w')\vdash[(X_1,w_1) \leftarrow x_1, \overline{Y}_{-1} \leftarrow \overline{y}_{-1}, \overline{N} \leftarrow \overline{n^\ast}, \overline{Z'} \leftarrow \overline{z^\ast}] \neg \alpha$. But then $\overline{Y} =\overline{y}$ is not a cause of $\alpha$ according to the modified definition. Indeed, $AC2a^m$ is satisfied  for $\overline{T}'=\overline{Y}_{-1} $
 by setting $\overline{N} = ((X_1,w_1), \overline{N}, \overline{Z'})$ and $\overline{n^\ast} = (x_1, \overline{n^\ast}, \overline{z^\ast})$ and $\overline{t'} = \overline{y'}_{-1}$ violating $AC3$ for $\overline{Y}=\overline{y}$. i.e.~, $\overline{Y}=\overline{y}$ is not a minimal cause by the modified definition as the conjunct obtained by removing $(X_1,w_1)=x_1$ from it is still a cause by the modified definition. 
This is a contradiction. Therefore, $AC2b^{o}$ is valid. 

For item 2, the proof is similar in spirit. In addition to 1, we need to show that if $\overline{Y'} \subseteq \overline{Y}_{-1} $,  $\overline{N'} \subseteq \overline{N}$, and $\overline{Z'} \subseteq \overline{Z}$, then 
\[
(\mathcal{K},\overline{t},w) \vdash [(X,w_1)\leftarrow x_1,  \overline{Y}_{-1} \leftarrow \overline{y}_{-1},  \overline{N'} \leftarrow   \overline{{n^\ast}'},  \overline{Z'} \leftarrow   \overline{{z^\ast}'} \phi ]
\]
If $X'=\emptyset$, then the condition holds since $(X,w)=x$ is a cause of $\phi$ according to the original definition by item 1. In case this condition does not hold for some non-empty $\overline{Y'} \subseteq \overline{Y}_{-1}$, then $\overline{Y} = \overline{y}$ does not satisfy the minimimality condition AC3 of the modified HP definition (in causal Kripke models). 

For item 3, the proof is same as item 1, upto the point where we have to prove $AC2^o$. Suppose there exists $\overline{Z'} \subseteq \overline{Z}$ such that 
\[
(\mathcal{K},\overline{t},w) \vdash [(X,w_1)\leftarrow x_1,  \overline{Y}_{-1} \leftarrow \overline{y}_{-1},  \overline{N'} \leftarrow   \overline{{n^\ast}'},  \overline{Z'} \leftarrow   \overline{{z^\ast}'} \neg \phi ]
\]
then $\overline{Y}_{-1} \leftarrow \overline{y}_{-1}$ satisfies AC2a and $AC2b^u$. Thus, $\overline{Y} = \overline{y}$ does not satisfy the minimimality condition AC3 for the updated definition. Hence proved. 
\end{proof}



\section{Soundness and completeness}\label{sec: compl}
In this section we provide the proof of soundness and (weak) completeness of the axiomatization given in Section \ref{ssec:axiomatization}. The proof is a modification of the proof provided in \cite{halpern2016actual} to include the modal operators.
\begin{proof}
Showing that the axiomatization is sound is routine. It is straightforward to verify that all the  axioms except $G$-axiom are valid and that modus ponens and necessitation preserve validity. For the G-axiom, note that the truth of the formula $(X,w)=x$ is independent of the world $w'$ at which it is evaluated. Thus, if it is true at some world, then it is true at all the worlds, in particular true at all the world related to $w$.

To prove that the axiomatization is weakly complete, we show contrapositively that if $\nvdash \psi$ then there exists a model satisfying $\lnot\psi$. As usual, starting with a consistent formula $\varphi$ we obtain a maximal consistent set $\Sigma$ containing all axioms such that $\varphi\in\Sigma$, is closed under $\land$ and consequence, and enjoys the disjunction property (see also the proof of Theorem 5.4.1 in~\cite[Section 5.5]{halpern2016actual}).

Before moving to the details of the proof, we provide a high-level presentation of the argument, to help the reader follow: Given a consistent set of formulas, we can extract the formulas that do not contain modalities. Treating the variables $(X,w)$ and $(X,w)$ (where $w\neq w'$) as simply distinct variables, this set can be seen as a consistent set of formulas for the standard logic of causality presented in \cite{halpern2016actual}, because the axioms in Section \ref{ssec:axiomatization} strictly extend the axioms of the logic in \cite{halpern2016actual}. Then, by the completeness presented in \cite{halpern2016actual}, we get a set of structural equations, which readily provides a set of structural equations over a Kripke model with the empty relation (where $(X,w)$ and $(X,w')$ are now interpreted as the same variable at different points in the Kripke model). By the soundness of the axioms in Section \ref{ssec:axiomatization}, the set of non-modal formulas at each such state is consistent. These consistent sets guarantee that the ``canonical'' model we construct has enough points to interpret all the names that appear in our finite set of formulas. The proof then follows a standard filtration argument to show in the usual way the truth lemma for modal formulas.

Let $\varphi$ be such that $\nvdash\lnot\varphi$. Let us define $W_{\varphi}:=\{w\in W\mid w\text{ appears in }\varphi\}$ and $S_{\varphi}:=\{\psi,\Box\psi \mid \psi\text{ is a subformula of }\varphi\}$. Now let $\Sigma$ be a maximal consistent set of formulas of $\mathbf{L}(W_\varphi)$ that contains $\varphi$. Given axiom C6 in \cite[Section 5.4]{halpern2016actual} and $\Diamond$-axiom and $\Box$-axiom, we can assume without loss of generality that all formulas are generated from $[\overline{Y} \leftarrow \overline{y}]X=x$ and $[\overline{Y} \leftarrow \overline{y}](X,w)=x$ using the connectives $\Diamond,\Box,\land,\lor$ and $\lnot$. Notice that $\Sigma$ ``decides'' the value of variables $(X,w)$ for every $w\in W_{\varphi}$ (that is to say, $(X,w)=x\in\Sigma$ for some $x\in$ for some $x\in\mathcal{R}(X,w)$). Consider the set $B=\{[\overline{Y} \leftarrow \overline{y}](X,w)=x\in\mathbf{L}(W_\varphi)\mid [\overline{Y} \leftarrow \overline{y}](X,w)=x\in\Sigma\}$. By the completeness in \cite[Section 5.5]{halpern2016actual}, it follows that there exists a system of structural equations satisfying the non-modal formulas of $\Sigma$. Using this system we can define a causal Kripke model with domain $W_\varphi$, and empty Kripke relation. By the soundness of this system it follows that for every $w\in W_{\varphi}$ the set $\Sigma_w:=\{[\overline{Y} \leftarrow \overline{y}]X=x\mid [\overline{Y} \leftarrow \overline{y}](X,w)=x\in\Sigma\}\cup B$ (the set $\Sigma_w$ includes the ) is consistent and hence can be extended to an maximal consistent set $\Sigma'_w$.

Let $S=S_{\varphi}\cup\{[\overline{Y} \leftarrow \overline{y}](X,w)=x,[\overline{Y} \leftarrow \overline{y}]X=x\in \mathbf{L}(W_\varphi)\}$. Notice that $S$ is finite. Define an equivalence relation on maximal consistent sets extending $B$ of $\mathbf{L}(W_\varphi)$, $T_1\sim T_2$ if and only if $T_1\cap S=T_2\cap S$. Given that $S$ is finite, there exist finite many equivalence classes. Let $\mathbb{W}$ be the set of equivalence classes, and let $\mathfrak{R}\subseteq\mathbb{W}\times\mathbb{W}$ be defined as $C_1\mathfrak{R}C_2$ if and only if there exists $T_1\in C_1$ and $T_2\in C_2$, such that for all $\psi\in T_2$, $\Diamond \psi\in T_1$. Define a name assignment $i$ such that $i(w)=[\Sigma'_w]$ for $w\in W_\varphi$, and  arbitrarily otherwise. Finally define the structural equations, depending only on variables of $W_\varphi$, exactly as defined in \cite[Section 5.4]{halpern2016actual}. In particular the equations are independent of variables in $W\setminus W_\varphi$, and $f_{(X,w)}(\overline{y})=x$, if and only if $[\overline{Y}\leftarrow\overline{y}]X=x\in T$ for any $T\in i(w)$ (given that $[\overline{Y}\leftarrow\overline{y}]X=x\in S$, this is well defined).

We claim that  $\overline{t},[T]\Vdash \psi$ if and only if $\psi\in T$, for every $\psi\in S$, and maximal consistent set $T$.  

The proof proceeds via induction on the complexity of the formulas. For formulas of the form $[\overline{Y} \leftarrow \overline{y}]X=x$, $[\overline{Y} \leftarrow \overline{y}](X,w)=x$, and for logical connectives the proof is verbatim the same as that of \cite[Section 5.4]{halpern2016actual}. 

Finally, let's show this for the case when $\psi$ is $\Diamond\sigma$. 

First, let's assume that $\overline{t},[T]\Vdash \Diamond\sigma$. Then there exists $C\in\mathbb{W}$ such that $[T]\mathfrak{R}C$ and $\overline{t},C\Vdash \sigma$. By induction hypothesis $\sigma\in T'$, for every $T'\in C$. Since $\sigma\in S$, by the definition of $\mathfrak{R}$, it follows that $\Diamond\sigma\in T$. 

For the converse direction, assume that $\Diamond\sigma\in T$. Notice preliminarily, that since $\Diamond\top\land \Box p\Rightarrow\Diamond p$ is a theorem of classical normal modal logic, then $(\Diamond\top\land \Box([\overline{Y} \leftarrow \overline{y}](X,w)=x))\Rightarrow \Diamond[\overline{Y} \leftarrow \overline{y}](X,w)=x$ is provable in our system. From the G-axiom, this implies that also \begin{equation}\label{eq:comple}
    (\Diamond\top\land ([\overline{Y} \leftarrow \overline{y}](X,w)=x))\Rightarrow \Diamond[\overline{Y} \leftarrow \overline{y}](X,w)=x
\end{equation} is provable. Consider the set $Z_T=\{\tau\in\mathbf{L}(W_\varphi)\mid \Diamond\tau\notin T\}$. Clearly $Z_T$ is an ideal of the free Boolean algebra of the logic. Given that $B\subseteq T$ and $\Diamond\sigma\in T$, it follows that $\Diamond\top\in T$, and by \eqref{eq:comple} it follows that $\Diamond B\subseteq T$ and so $B\cap Z_T=\varnothing$. Hence there exists a maximal consistent set $T'$, extending $B\cap\{\sigma\}$ such that $T'\cap Z_T=\varnothing$. By definition $[T]\mathfrak{R}[T']$, and hence $\overline{t},[T]\Vdash \Diamond\sigma$, as required.

The proof, that the model is recursive, follows again the proof of \cite[Section 5.4]{halpern2016actual}, using the fact that our Kripke frame is finite.
\end{proof}

\end{document}